%% file: main.tex
\documentclass{article}

\PassOptionsToPackage{numbers, compress}{natbib}
\usepackage[preprint]{neurips_2026}

\usepackage[utf8]{inputenc} 
\usepackage[T1]{fontenc}    
\usepackage{hyperref}       
\usepackage{url}            
\usepackage{booktabs}       
\usepackage{multirow}
\usepackage{amsfonts}       
\usepackage{nicefrac}       
\usepackage{microtype}      
\usepackage{xcolor}         

\usepackage{wrapfig}
\usepackage{ifthen}
\usepackage{microtype}
\usepackage{graphicx}
\usepackage{subcaption}
\usepackage{bm}
\usepackage[normalem]{ulem}
\usepackage[dvipsnames]{xcolor}
\usepackage[ruled]{algorithm2e}
\usepackage{amsmath}
\usepackage{amssymb}
\usepackage{mathtools}
\usepackage{amsthm}
\usepackage[textsize=tiny]{todonotes}

\usepackage{booktabs} 

\usepackage{hyperref}

\usepackage[capitalize,noabbrev]{cleveref}
\input{custom_commands}

\input{macros}

\title{
To discretize continually:\\ Mean shift interacting particle systems\\ for Bayesian inference
}

\author{%
  Ayoub Belhadji\thanks{Equal contributions. Author emails: abelhadj@mit.edu, dannys4@mit.edu, ymarz@mit.edu}\\
  \And
  Daniel Sharp\footnotemark[1]\\[6pt]
  Center for Computational Science and Engineering \\
  Laboratory for Information and Decision Systems \\
  Massachusetts Institute of Technology\\
  \And
  Youssef Marzouk \\
}

\begin{document}

\maketitle

\begin{abstract}
\input{sections/00_abstract}

\end{abstract}

\input{sections/01_introduction}

\input{sections/02_background}

\input{sections/03_results}

\input{sections/04_experiments}

\input{sections/05_conclusions}

\newpage
\bibliography{bibs/abelhadj,bibs/dannys4}

\appendix
\onecolumn
\input{sections/appendix}

\clearpage

\end{document}

%% file: custom_commands.tex
\newcommand{\EE}{\mathbb{E}}

\newcommand{\gX}{\mathcal{X}}

\newcommand{\piquad}{{p}}

\newcommand{\Y}{Y}
\newcommand{\Yast}{Y^\ast}
\newcommand{\yi}{y_{i}}
\newcommand{\yj}{y_{j}}
\newcommand{\ym}{y_{m}}
\newcommand{\Yt}{\Y^{(t)}}
\newcommand{\Ytp}{\Y^{(t+1)}}

\newcommand{\w}{\bm{w}}
\newcommand{\wY}{\bm{w}^{\ast}(\Y)}
\newcommand{\wlY}{\bm{w}_{\lambda}^{\ast}(\Y)}

\newcommand{\Wast}{\bm{W}^\ast}
\newcommand{\wast}{\bm{w}^\ast}
\newcommand{\WY}{\bm{W}^{\ast}(\Y)}

\newcommand{\WLY}{\bm{W}^{\ast}_{\lambda}(\Y)}

\newcommand{\piquadYw}{\piquad(\w,\Y)}

\newcommand{\KY}{\bm{K}(\Y)}

\newcommand{\vzero}{v_0}
\newcommand{\vzerob}{\bm{v}_0}
\newcommand{\vone}{v_1}

\newcommand{\voneb}{\bm{v}_1}

\newcommand{\dpix}{\,\mathrm{d}\pi(x)}

\newcommand{\MMD}{\mathrm{MMD}}
\newcommand{\MSIP}{\mathrm{MSIP}}

\newcommand{\tocite}[1][]{[\textbf{\ifthenelse{\equal{#1}{}}{???] }{#1]}}}

\theoremstyle{plain}
\newtheorem{theorem}{Theorem}[section]
\newtheorem{proposition}[theorem]{Proposition}

\theoremstyle{definition}

\theoremstyle{remark}

%% file: macros.tex
\newcommand{\RR}{\mathbb{R}}

\newcommand{\X}{\mathcal{X}}

\newcommand{\MMS}{\mathrm{MSIP}}

\newcommand{\wi}{w_{i}}

\newcommand{\yij}{y_{ij}}
\newcommand{\ymsj}{y_{mj}}

\newcommand{\yone}{y_{1}}

\newcommand{\yms}{y_{m}}

\newcommand{\xj}{x_{j}}

\newcommand{\vzerov}{\bm{v}_{0}}

\newcommand{\hvzero}{\hat{v}_{0}}

\newcommand{\vonev}{v_{1}}
\newcommand{\vonem}{\bm{v}_{1}}

\newcommand{\hvonev}{\hat{v}_{1}}

\newcommand{\Kyl}{\bm{K}_{\lambda}(\Y)}

\newcommand{\bmW}{\bm{W}(\Y)}

\newcommand{\what}{\hat{\bm{w}}(\Y)}

\newcommand{\whati}{\hat{w}_{i}(\Y)}
\newcommand{\whatms}{\hat{w}_{m}(\Y)}

\newcommand{\Dij}{\nabla_{ij}}
\newcommand{\bij}{\bm{b}_{ij}}
\newcommand{\Ki}{\bm{K}_i}
\newcommand{\Km}{\bm{K}}

\theoremstyle{plain}
\theoremstyle{definition}
\theoremstyle{remark}

\crefname{corollary}{Corollary}{Corollaries}

%% file: sections/00_abstract.tex
Integration against a probability distribution given its unnormalized density is a central task in Bayesian inference and other fields. We introduce new methods for approximating such expectations with a small set of
weighted samples---i.e., a quadrature rule---constructed via an interacting particle system that minimizes maximum mean discrepancy (MMD) to the target distribution. These methods extend the classical mean shift algorithm, as well as recent algorithms for optimal quantization of empirical distributions, to the case of continuous distributions. Crucially, our approach creates dynamics for MMD minimization that are invariant to the unknown normalizing constant; they also admit both gradient-free and gradient-informed implementations. The resulting mean shift interacting particle systems converge quickly, capture anisotropy and multi-modality, avoid mode collapse, and scale to high dimensions. We demonstrate their performance on a wide range of benchmark sampling problems, including multi-modal mixtures, Bayesian hierarchical models, PDE-constrained inverse problems, and beyond.

%% file: sections/01_introduction.tex
\section{Introduction}\label{sec:intro}
The need to compute expectations with respect to a target probability distribution $\pi$ is ubiquitous in statistical inference and machine learning.
Most distributions, however, do not admit any closed-form numerical integration technique. This situation often arises in Bayesian inference, where posterior 
quantities of interest are expressed as integrals over a distribution whose density is only known up to normalization; similar situations arise in computational chemistry and other physical sciences.

Recent work has approached integral discretization through gradient flows on the space of probability measures~\cite{San17,ChNiRi24}.
In this framework, a discretization is identified with a probability measure, 
often an empirical measure supported on finitely many particles, 
and is characterized as the solution of a variational problem. One then constructs a gradient flow to minimize the chosen discrepancy between this measure and the target distribution $\pi$. Such formulations naturally give rise to partial differential equations (PDEs) which may be discretized via systems of interacting particles~\cite{LiWa16,GiHoLiSt20,GaNuRe20,CaHoSt22,MaMa24,WaNu24}.

Crucially, the most useful gradient flow methods have dynamics invariant to the unknown normalizing constant of the target density; this feature is usually due to the choice of discrepancy, e.g., Kullback--Leibler (KL) divergence, $\chi^2$ divergence, or kernelized Stein discrepancy (KSD).
Designing particle systems as discretizations of gradient flows leads to algorithms with strong theoretical guarantees largely concerned with the mean-field limit, i.e., when the number of particles goes to infinity. The practical, non-asymptotic regime (with finite particles) still presents numerous challenges, especially for target distributions exhibiting strong non-Gaussianity, local anisotropy, and multi-modality.

In this paper, we study integration through the lens of \emph{quadrature rules}. A quadrature rule approximates the expectation of a function $f:\mathcal{X}\to\mathbb{R}$ by a weighted average of function evaluations,
\begin{equation}
\mathbb{E}_\pi[f(Y)] = \int_{\mathcal{X}}f(y) \, \mathrm{d}\pi(y) \approx \sum\limits_{i=1}^{M} w_{i} f(y_i),
\end{equation}
where the weights $w_i \in \mathbb{R}$ and the nodes $\yi \in \mathcal{X}$. From this perspective, most interacting particle systems use \emph{uniform} weights, a design choice that constrains their effectiveness as numerical integration schemes. We instead develop \emph{weighted} quadrature rules for essentially arbitrary unnormalized densities. This task is nontrivial, as classical quadrature constructions rely on estimating high-order moments~\cite{GoWe69,Gau81}, which is generally intractable for unnormalized densities. 

We show that effective quadrature rules for unnormalized densities can be designed by minimizing a regularized maximum mean discrepancy (MMD) associated with the squared exponential kernel. 
This result is unexpected: MMD minimization is typically viewed as intractable for high-dimensional unnormalized densities.
Our approach circumvents this obstacle by introducing a damped fixed-point iteration whose dynamics are invariant, by construction, to the unknown normalizing constant.
Our {main contributions} are as follows:
\begin{itemize}

    \item We extend an optimal quantization method from the setting of empirical target distributions to continuous target distributions known through their unnormalized densities, as in sampling and Bayesian inference.

    \item We introduce finite-particle dynamics for MMD minimization in this setting, and prove that they are invariant to the normalization of the target despite the fact that MMD itself depends on this constant.

    \item We show that certain regularization techniques needed for stability of our kernel implementation are cleanly interpretable as optimization of a \textit{regularized} MMD.%

    \item We introduce multiple \textit{gradient-free} and \textit{gradient (score)-informed} formulations of our algorithm, suitable for a broad range of application domains.

    \item We demonstrate that our approach is scalable to high dimensions and to diverse, geometrically challenging (e.g., strongly multi-modal and anisotropic) target distributions.
    
\end{itemize}

\begin{figure}[t!]
    \includegraphics[width=0.95\linewidth,clip,trim={0 0 0 0}]{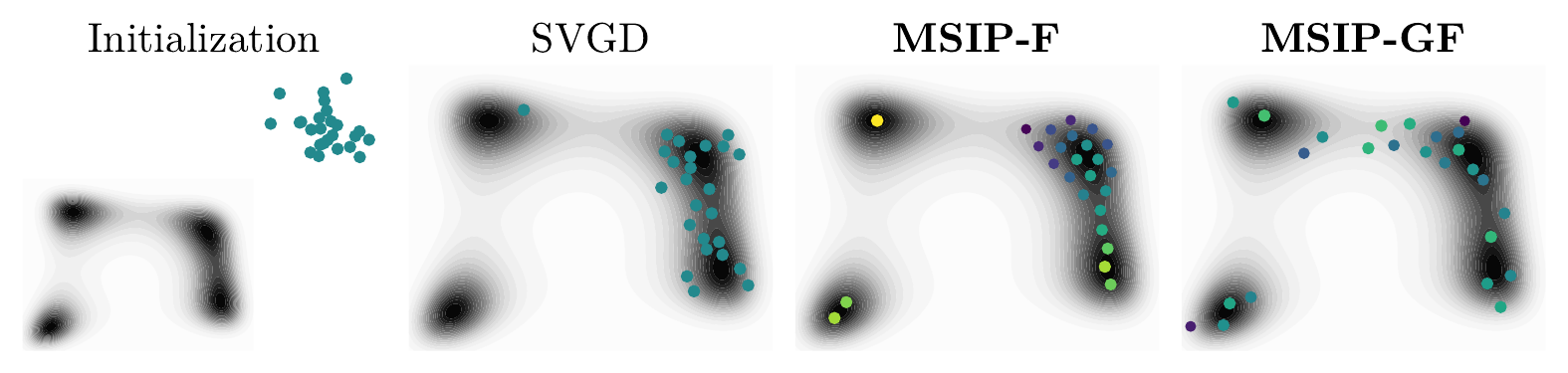}
    \caption{Example of our methods, with color indicating particle weight. (Left): Particle initialization far from support. (Middle left): Result of SVGD with 100 target gradient evaluations per particle. (Middle right): Our gradient-informed method with 100 target gradient evaluations per particle. (Right): Our gradient-free method with 500 target density evaluations per particle.}
    \label{fig:himmelblau_title}
\end{figure}

\paragraph{Notation}
For a matrix $\bm{A}\in\RR^{m\times n}$ with elements $[\bm{A}]_{ij} = a_{ij}$, we write the matrix slice as $[\bm{A}]_{i,:} \coloneqq (a_{i1},a_{i2},\ldots,a_{in})$. Throughout the paper, we consider a domain of interest $\gX\subseteq\RR^d$ and a density $\pi:\gX\to\RR^+$. Then, we use a set of particles $\{ \yone,\ldots,\ym \}\subset\gX$ in a configuration denoted $\Y\in\gX^M$ with $[\Y]_{i,:} := \yi$. 
For $\mathbf{x}\in\RR^M$, we use the notation $\mathrm{diag}(\mathbf{x})\in\RR^{M\times M}$ to denote the diagonal matrix whose $(i,i)$-th entry is $x_i$.
Given a symmetric positive-definite kernel $\kappa:\gX\times\gX\to\RR^+$, we define the kernel matrix $\bm{K}:\gX^M\to\RR^{M\times M}$ via $[\KY]_{ij} = \kappa(\yi,\yj)$. We write the \textit{kernel mean embedding} $\vzerob:\gX^M\to\RR^{M}$ of a density $\pi$ as
\begin{equation}\label{eq:vzero_def}
\vzero(y) = \int\kappa(x,y)\,\dpix,\quad [\vzerob(\Y)]_i = \vzero(\yi).
\end{equation}
Similarly, we define the \textit{kernel embedding of the first moment} $\voneb:\gX^M\to\RR^{M\times d}$ as
\begin{equation}\label{eq:vone_def}
\vone(y) = \int x\kappa(x,y)\,\dpix,\quad [\voneb(\Y)]_{i,:} = \vone(\yi).
\end{equation}

%% file: sections/02_background.tex
\section{Theory and background}
We contextualize our approach by reviewing literature on integration given an unnormalized density. 

\subsection{From MCMC methods and Langevin dynamics to interacting particle systems}
The overdamped Langevin diffusion admits an invariant law with density $\pi$. Discretizing this diffusion leads to Markov chain Monte Carlo (MCMC) algorithms such as ULA and MALA~\cite{Bes94,RoTw96}, which incorporate gradients of the log-density and enable more efficient exploration. In the latter case, Metropolization~\cite{Nea96,RoCaCa99} ensures that the target distribution is preserved after discretization. There are of course myriad other sampling algorithms using Markov chains for Bayesian inference and generative modeling, including both gradient-free~\cite{GoWe10,HaLaMiSa06} and gradient-based~\cite{DuKePeRo87,HoGe14} approaches. Yet these algorithms often struggle on the geometrically complex target distributions described in \S\ref{sec:intro}.

\citet{JoKiOt98} interpret the Langevin dynamics as the gradient flow of the KL divergence in the Wasserstein geometry. The gradient flow is expressed as a PDE governing the evolution of a probability density $\rho_{t}$ for which $\mathrm{KL}(\rho_{t}\|\pi)$ decreases with $t$. 
This variational point of view has since been extended to alternative geometries and divergences.
These variational approaches emphasize 
using an \emph{ensemble of interacting particles} instead of independent Markov chains.

\paragraph{Stein variational gradient descent}
SVGD is derived as a gradient flow of the KL divergence under the Stein geometry~\cite{DuNuSz23}, which is induced by a reproducing kernel Hilbert space (RKHS)~\cite{LiWa16}. At each iteration, a nonlinear map defined by the kernel of the RKHS is applied to the set of particles. %
This algorithm has since been the subject of extensive theoretical investigation. 
Convergence of mean-field, i.e., infinite-particle, SVGD dynamics in various divergences and discrepancies has been discussed thoroughly~\cite{Liu17,KoSaArLuGr20,ChLeLuMaRi20,SaSuRi22,SuKaRi23,DuNuSz23,HeBaSrLu25}. More recently, \cite{ShMa23,LiGhBaPi23, DaNa23, BaBaGh25} obtain convergence guarantees for the finite-particle SVGD algorithm.
Despite this theory, standard SVGD empirically suffers from mode collapse, often failing to cover all modes of multi-modal target distributions~\cite{ZhLiShZhChZh18}.

\paragraph{Interacting particle Langevin samplers}
Inspired by methods for Bayesian filtering~\cite{Eve03}, ensemble Kalman approaches have been introduced for gradient-free sampling in the Bayesian setting, particularly with Gaussian observation models. In the mean-field limit, certain ensemble Kalman samplers (EKS) realize gradient flows of the KL divergence with respect to the so-called Kalman--Wasserstein geometry. The resulting PDE is a Fokker--Planck equation preconditioned by the covariance matrix of the evolving density~\cite{GiHoLiSt20}. EKS also admits affine-invariant variants, e.g., ALDI~\cite{GaNuRe20}, a crucial property for highly anisotropic target densities where standard Langevin dynamics converge slowly. 

\paragraph{Kernel Fisher--Rao flow}
Instead of using the Wasserstein geometry as in~\citep{JoKiOt98}, a few recent  algorithms propose using gradient flow in the \textit{Fisher--Rao} geometry~\citep{WaNu24,MaMa24}. These methods follow a log-geometric path $\log\pi_\tau := (1-\tau)\log\pi_0 + \tau\log\pi_1$, $\tau\in[0,1]$, which connects an initial $\pi_0$ and a target $\pi_1$. While such bridges between prior and posterior have been used in Bayesian inference for many years~\cite{GeTh95}, schemes in \citep{WaNu24,MaMa24} implement either gradient-free and gradient-informed estimators of particle transport using kernel methods, and allow practitioners to choose a time integration scheme over $\tau$. The theory of Fisher--Rao gradient flows has also seen much recent development \cite{ChHuHuReSt23,ZhMi24,CaChHuHuWe26}.

\paragraph{Consensus-based methods}

Consensus-based sampling (CBS) comprises a system of interacting particles designed to sample from an unnormalized density~\cite{CaHoSt22}; it builds on consensus-based optimization methods for gradient-free non-convex optimization~\cite{PiToTsMa17} and relates to collective behavior modeling~\cite{KeEb95,DoBl05,Tos06,CuSm07,CaFoToVe10}. At each iteration, particles are driven toward a consensus point, a weighted average favoring lower objective values, while exploring the space through additive noise. CBS is amenable to theoretical study from the mean-field perspective~\citep{CaChToTs18,HuQiRi23,GeHoVa25,KoWeZe25}, with explicit convergence rates, though these guarantees are generally limited to unimodal densities~\cite{CaHoSt22,MaMa24}.

\subsection{MMD--optimizing methods}

MMD is widely used in numerical analysis to assess quadrature rules~\cite{Hic98,BaLaOb12,MuFuSrSc17,MoPhDiBh18}.
Using the reproducing kernel $\kappa$ for the RKHS $\mathcal{H}$, the MMD is given by
\begin{equation}\label{eq:mmd_def}
    \MMD^2(\mu,\nu) \coloneqq\mathbb{E}_{\mu,\mu}[\kappa(X,X^\prime)]-2\EE_{\mu,\nu}[\kappa(X,Y)]+\EE_{\nu,\nu}[\kappa(Y,Y^\prime)] \, .
\end{equation}
Unlike the KL divergence, MMD is well defined even for mutually singular measures, as when comparing a continuous distribution to a discrete empirical distribution. %
Indeed, for a quadrature rule expressed as a discrete measure $\piquadYw\coloneqq \sum_{i=1}^M w_i\delta_{\yi}$, the worst-case integration error over the unit ball of the RKHS, given by
\begin{equation*}
 \sup_{\|f\|_{\mathcal{H}}\leq 1}\bigg|\int_{\mathcal{X}} f(x) \mathrm{d} \pi(x) - \sum\limits_{i=1}^{M} w_{i} f(y_i)\bigg|,
\end{equation*}
coincides with $\mathrm{MMD}( \pi, \piquadYw)$.
This correspondence has motivated minimizing $\MMD(\pi,\piquadYw)$ using different approaches, including ridge leverage score sampling \cite{Bac17}, determinantal point processes \cite{BeBaCh19,Bel21}, continuous volume sampling \cite{BeBaCh20}, Fekete points \cite{KaSaTa21}, recombination algorithms \cite{HaObLy22,HaObLy23},
and gradient flows \cite{ArKoSaGr19,GlDvMiZh24}. These methods rely on symmetric kernels and, unfortunately,
assume a normalized density. %
Using~\eqref{eq:mmd_def} for density $\tilde{\pi} = Z_{\tilde{\pi}}\pi$, we see
\begin{equation}
    \MMD^2(\tilde{\pi},\piquadYw) = C_{\tilde{\pi}} - 2Z_{\tilde{\pi}}\EE_{\pi}\left[\sum_{i=1}^M \wi\kappa(X,\yi)\right] + \sum_{i,j=1}^{M}\wi w_j\kappa(\yi,\yj),
\end{equation}
for some constant $C_{\tilde{\pi}}$ independent of $p$. This affine relationship of the squared MMD with $Z_{\tilde{\pi}}$ makes the former infeasible to directly estimate, let alone minimize, for a general, unnormalized target.

The kernel Stein discrepancy (KSD), proposed for assessing sample quality~\cite{LiLeJo16,ChStGr16,GoMa17}, is a kernelized semi-metric invariant to this normalization and is equivalent to the MMD associated with a $\pi$-induced Stein kernel. KSD minimization has been used in the design of quadrature rules~\cite{ChMaGoBrOa18,KoAuMaAb21}. By construction, however, first-order minimization of the KSD requires second-order information from the target density $\pi$~\cite{KoAuMaAb21}, and thus it is often difficult or impossible to perform this minimization without resorting to greedy methods~\cite{ChMaGoBrOa18} or otherwise using methods that scale poorly in dimension.

%% file: sections/03_results.tex
\section{Main results}
As discussed above, the minimization of KL divergence (and related functionals) underlies many methods for sampling from unnormalized densities, in large part because the associated objectives admit gradient-based optimization schemes that are invariant to the unknown normalizing constant. In contrast, the minimization of MMD (excluding Stein kernels) is typically seen as infeasible here. We challenge this notion and show that
MMD minimization is achievable for unnormalized densities.

Recalling the definition of $\piquadYw$, consider the function $F^\pi:\RR^M\times\gX^M\to\RR^+$ given by
\begin{equation}\label{eq:F}
    F^\pi(\bm{w}, \Y):= \frac{1}{2} \mathrm{MMD}^2\left(\pi, \piquadYw \right).
\end{equation}
Observe that any critical point $(\wast,\Yast)\in\RR^{M} \times \gX^{M}$ of $F$ is defined via %
\begin{equation}\label{eq:grad_F_wrt_w_Y_0}
    \nabla_{\bm{w}} F^\pi (\wast, \Yast) = 0, \quad \nabla_{y_i} F^\pi (\wast, \Yast) = 0.
\end{equation}
We use conditions \eqref{eq:grad_F_wrt_w_Y_0}, which are necessary for optimality, as the desiderata for our quadrature rule. Recalling the notation $\vzero$ and $\vone$ from~\eqref{eq:vzero_def} and~\eqref{eq:vone_def}, introduce the squared exponential kernel $\kappa(x,y) = \exp\bigl( - \frac12 \sigma^{-2}\|x-y\|^2 \bigr)$ to express %
\eqref{eq:grad_F_wrt_w_Y_0} as
\begin{equation}\label{eq:critical_condition}
    \bm{K}(\Yast)\wast = \vzerob(\Yast),\quad
    \Wast \bigl( \bm{K}(\Yast) \bm{W}^\ast\ \Yast - \voneb(\Yast) \bigr) = \bm{0},
\end{equation}%
where $\Wast := \mathrm{diag}(\wast)$ is the diagonal weight matrix. For a given $\Y$, the first equation is linear in $\wast$; we thus let the function $\wast:\gX^M\to\RR^M$ describe the weights for configuration $Y$ as
\begin{equation}\label{eq:def_w_hat}
    \wY = \KY^{-1}\vzerob(\Y),
\end{equation}
and define $\WY = \mathrm{diag}(\wY)$.

The second condition in \eqref{eq:critical_condition} is more challenging. A strategy to satisfy the condition is to alternate between solving the linear system~\eqref{eq:def_w_hat} for $\wast$ and updating the configuration $\Y$ through the iteration
\begin{equation}\label{eq:msip_dynamics}
    \Ytp = (1-\eta)\Yt + \eta\Psi_{\MSIP}(\Yt).
\end{equation}
Here $\eta>0$ is a step size, and the function $\Psi_{\MSIP}:\gX^M\to\gX^M$ is defined by
\begin{equation}
    \Psi_{\MSIP}(\Y) := \WY^{-1}\KY^{-1}\voneb(\Y).
\end{equation}
This algorithm, known as \textit{mean shift interacting particles} (MSIP), was introduced in~\citep{BeShMa25} to minimize the MMD with respect to a probability measure $\pi$. In particular, it was proven that the dynamics \eqref{eq:msip_dynamics} correspond to a preconditioned gradient descent of the function $F^\pi_M:\gX^{M}\to\RR^+$ defined as
\begin{equation}\label{eq:F_M}
    F^\pi_M(\Y) = \inf_{\bm{w}\in\RR^M} F^\pi(\bm{w}, \Y) \, .
\end{equation}
The implementation of MSIP, however, was previously restricted to the ``data-driven'' setting where $\pi$ is available as an empirical distribution, as this input enables direct evaluation of the functions $v_{0}$, $v_{1}$.

In this work, we overcome this restriction and adapt MSIP to the case of an unnormalized density $\tilde{\pi}$.
Our approach builds on the observation that the MSIP map $\Psi_{\MSIP}$, unlike $F^\pi$ itself, is invariant to the normalizing constant. Unfortunately, this property alone does not suffice for a systematic implementation of MSIP in the unnormalized setting. Two more challenges must be addressed: numerical stability and reliance of the map $\Psi_{\MSIP}$ on the embeddings $v_{0}$ and $v_{1}$.%

\subsection{Regularized mean shift interacting particles}

We first grapple with extending the definition of the MSIP map to arbitrary particle configurations $Y$.
The problem of conditioning primarily stems from the explicit inversion of the kernel matrix $\bm{K}(Y)$. A classical remedy for such numerical issues is to replace $\bm{K}(Y)$ with its regularized counterpart $\bm{K}_\lambda(\Y) := \bm{K}(Y) + \lambda \bm{I}_M$ for some $\lambda >0$. 
We thus use a \textit{regularized} MSIP map $\Psi_{\MSIP,\lambda}$ defined as
\begin{equation}\label{eqn:def_msip_lambda}
    \Psi_{\MSIP,\lambda}(\Y) := \WLY^{-1}\bm{K}_\lambda(\Y)^{-1}\voneb(\Y)
\end{equation}
where $\WLY = \mathrm{diag}(\wlY)$, with 
\begin{equation}\label{eq:optimal_w_lam}
\wlY = \bm{K}_{\lambda}({Y})^{-1}\vzerob({\Y}),
\end{equation}
which is a regularization of the optimal weights defined by~\eqref{eq:def_w_hat}. This regularized map provides significantly more numerical stability---the matrix $\bm{K}_\lambda(\Y)$ is nonsingular for any configuration $\Y$---but, crucially, still yields preconditioned gradient descent of a penalized MMD, as we show below. Define the penalized objective function $F^\pi_{\lambda}: \RR^M\times\gX^M\to\RR^+$ as
\begin{equation}\label{eq:F_lambda}
    F^\pi_{\lambda}(\bm{w}, \Y)\coloneqq \frac{1}{2} \mathrm{MMD}\left(\pi, \piquadYw \right)^2 + \lambda \|\bm{w}\|^2 = F^\pi(\w,\Y) + \lambda \|\bm{w}\|^2,
\end{equation}
and consider the function $F^\pi_{M, \lambda}: \mathcal{X}^{M} \rightarrow \mathbb{R}_{+}$,
\begin{equation}\label{eq:F_lambda_M}
    F^\pi_{M,\lambda}(\Y) \coloneqq\inf_{\bm{w}\in\RR^M} F^\pi_{\lambda}(\bm{w}, \Y).
\end{equation}
\begin{theorem}\label{thm:lambda_MSIP_preconditioner}
For the squared exponential kernel, we have
    \begin{equation}
        \nabla F^\pi_{M,\lambda}(Y) = \sigma^{-2} \WLY\Big(  \bm{K}_{\lambda}(Y)  \WLY Y - \voneb(Y) \Big),
    \end{equation}
where $\bm{K}_{\lambda}(Y):= \bm{K}(Y) + \lambda \bm{I}_{M}$, and $\WLY=\textup{diag}(\wlY)$, with $\wlY$ from~\eqref{eq:optimal_w_lam}.
\end{theorem}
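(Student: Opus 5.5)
The plan is to use an envelope (Danskin-type) argument: differentiate $F^\pi_\lambda(\bm{w},\Y)$ in $\Y$ at fixed $\bm{w}$, then evaluate at the minimizing weights. The first step is to expand the objective via \eqref{eq:mmd_def}:
\begin{equation*}
F^\pi_\lambda(\bm{w},\Y) = \tfrac12 C_\pi - \bm{w}^\top \vzerob(\Y) + \tfrac12 \bm{w}^\top \KY \bm{w} + (\text{ridge term in } \bm{w}),
\end{equation*}
where $C_\pi=\EE_{\pi,\pi}[\kappa(X,X')]$ does not depend on $(\bm{w},\Y)$.

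For fixed $\Y$ this is a strictly convex quadratic in $\bm{w}$. Its unique minimizer solves a ridge system $(\KY+c\lambda \bm{I}_M)\bm{w}=\vzerob(\Y)$, with $c$ fixed by how the penalty is normalized. I will read the penalty so that the normal equations are exactly $\bm{K}_\lambda(\Y)\bm{w}=\vzerob(\Y)$, i.e.\ so that the minimizer is $\wlY$ from \eqref{eq:optimal_w_lam}. With the penalty written literally as $\lambda\|\bm{w}\|^2$ one gets $c=2$, and one simply replaces $\lambda$ by $2\lambda$ throughout; I will flag this convention explicitly.

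Since $\bm{K}_\lambda(\Y)$ is invertible and depends smoothly on $\Y$, the map $\Y\mapsto\wlY$ is $C^1$. Therefore $F^\pi_{M,\lambda}(\Y)=F^\pi_\lambda(\wlY,\Y)$, and by the chain rule together with $\nabla_{\bm{w}}F^\pi_\lambda=0$ at the minimizer,
\begin{equation*}
\nabla_{y_i}F^\pi_{M,\lambda}(\Y)=\nabla_{y_i}F^\pi_\lambda(\bm{w},\Y)\big|_{\bm{w}=\wlY}.
\end{equation*}
The penalty does not depend on $\Y$, so only the MMD part contributes to this partial gradient.

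Next I compute the partial gradient using the squared exponential identity $\nabla_y\kappa(x,y)=\sigma^{-2}(x-y)\kappa(x,y)$. This gives $\nabla v_0(y)=\sigma^{-2}\bigl(v_1(y)-y\,v_0(y)\bigr)$. For the quadratic term, by symmetry of $\kappa$, the $y_i$-derivative of $\tfrac12\bm{w}^\top\KY\bm{w}$ is $w_i\sum_j w_j\sigma^{-2}(y_j-y_i)\kappa(y_i,y_j)$. Collecting terms,
\begin{equation*}
\nabla_{y_i}F^\pi_\lambda=\sigma^{-2}w_i\Bigl([\KY\bm{W}\Y]_{i,:}-y_i[\KY\bm{w}]_i - v_1(y_i)+y_i v_0(y_i)\Bigr),
\end{equation*}
where $\bm{W}=\mathrm{diag}(\bm{w})$.

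The key step is to substitute $\bm{w}=\wlY$. The normal equations give $v_0(y_i)-[\KY\wlY]_i=\lambda w_i$. Hence the two terms proportional to $y_i$ combine into $\lambda w_i y_i=[\lambda\bm{I}_M\WLY\Y]_{i,:}$. This merges with $[\KY\WLY\Y]_{i,:}$ into $[\bm{K}_\lambda(\Y)\WLY\Y]_{i,:}$. Stacking the rows over $i$ yields the claimed matrix formula.

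I expect no deep obstacle. The delicate points are justifying the envelope step, which rests on strict convexity and smooth dependence of the minimizer, both guaranteed by $\lambda>0$, and keeping the factor convention of the ridge penalty consistent with $\bm{K}_\lambda$. The cancellation that produces $\lambda\bm{I}_M$ is exactly where that convention matters.
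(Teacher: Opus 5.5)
Your proposal is correct and follows the paper's proof in substance. The paper substitutes $\bm{w}^\ast_\lambda(Y)$ and differentiates it explicitly through $\bm{K}_\lambda(Y)^{-1}$, which reproduces by hand the reduced formula $\nabla_{y_i}F^\pi_{M,\lambda} = -\langle \bm{w}^\ast_\lambda, \nabla_{y_i}\bm{v}_0\rangle + \tfrac12\langle \bm{w}^\ast_\lambda, (\nabla_{y_i}\bm{K})\,\bm{w}^\ast_\lambda\rangle$ that your envelope argument gives at once; it then uses the normal equations exactly as you do to produce $\lambda\bm{W}^\ast_\lambda(Y)Y$, and your penalty flag is well taken, since the paper's proof implicitly uses $\tfrac{\lambda}{2}\|\bm{w}\|^2$ (writing $F_{M,\lambda}$ via $\langle\hat{\bm{w}},\bm{K}_\lambda(Y)\hat{\bm{w}}\rangle$) while the literal $\lambda\|\bm{w}\|^2$ in \eqref{eq:F_lambda} would give $\bm{K}(Y)+2\lambda\bm{I}_M$.
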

Due to~\Cref{thm:lambda_MSIP_preconditioner}, 
we can see that the regularized dynamics 
\begin{equation}\label{eq:lambda_msip_dynamics}
    \Ytp = (1-\eta)\Yt + \eta\Psi_{\MSIP, \lambda}(\Yt),
\end{equation}
for nonzero weights $\bm{\hat{w}}_\lambda$ leads to
\begin{align}
    \Ytp & = \Yt - \eta P_{\lambda}(\Y)^{-1}\nabla F^\pi_{M,\lambda}(Y) ,\\
    \text{with preconditioner } P_{\lambda}(\Y) & := \sigma^{-2} \WLY \bm{K}_{\lambda}(Y)\WLY.\label{eq:def_regularized_precond}
\end{align}
In other words, $\lambda$-regularization mollifies 
the target functional while ensuring that $\bm{K}_\lambda$ %
is invertible.

\subsection{Numerical implementations}\label{sec:vzero_vone_estimators}

Implementing the iteration~\eqref{eq:lambda_msip_dynamics} requires the kernel mean embedding $\vzerob$ and the kernelized first moment $\voneb$ for any set of particles $\Y$. Unlike the data-driven setting where $\pi$ is an empirical measure, these functions do not admit closed-form expressions. In what follows, we propose computationally tractable estimators when using the squared exponential kernel. First, note that
\begin{equation}\label{eq:primitive_v_pair}
\sigma^2\nabla\log v_0(y) = \frac{\sigma^2\nabla v_0(y)}{v_0(y)} = \frac{v_1(y) - yv_0(y)}{v_0(y)} = \frac{v_1}{v_0}(y) -y.
\end{equation}
Therefore, we can consider the primitive quantities to be the pair of functions $(v_0, v_1)$ or the pair $(\log v_0,\,\sigma^2\nabla\log v_0)$ with some equivalence.
Our estimators rely on the following observation:
\begin{align}
    \vzero(y) &= \int_{\mathbb{R}^d}\exp\left(-\frac{\|x-y\|^2}{2\sigma^2}\right)\pi(x)\mathrm{d}x = \omega_{\sigma,d}\int_{\mathbb{R}^{d}}\pi(y+\sigma\xi)\,\mathrm{d}\rho(\xi),
\end{align}
where $\rho$ is the standard multivariate Gaussian distribution and $\omega_{\sigma,d} := \big(\sqrt{2\pi}\sigma\big)^d$. Similarly, %
\begin{align}
    \vone(y) &= \omega_{\sigma,d} \int (y+\sigma\xi)\pi(y+\sigma\xi)\,\mathrm{d}\rho(\xi) \label{eq:vone_firstvariant} \\
    &= y\vzero(y) + \sigma\omega_{\sigma,d} \int \xi\pi(y+\sigma\xi)\,\mathrm{d}\rho(\xi). \label{eq:vone_secondvariant}
\end{align}

The problem of estimating $\vzero$ and $\vone$ over the unnormalized density $\pi$ thus reduces to computing integrals with respect to a multivariate Gaussian distribution. We propose to estimate these integrals via an inner %
quadrature rule. Suppose $(u_q,\xi_q)_{q=1}^Q\subset \RR\times\gX$ satisfies%
\begin{equation}\label{eq:inner_quadrature_rule}
\int \varphi(\xi) \mathrm{d}\rho(\xi) \approx \sum\limits_{q=1}^{Q} u_{q} \varphi(\xi_{q}),
\end{equation}
where $\varphi$ is a real-valued function defined on $\gX$. 
A first choice for this \textit{inner} quadrature rule is Monte Carlo, which involves sampling $\xi_1,\ldots,\xi_Q \stackrel{\mathrm{iid}}{\sim} \rho$ with weights $u_q=Q^{-1}$. A second choice uses spherical radial quadrature rules~\cite{BeZhMa24,ArHa09}. A third option is a simple one-point quadrature rule, i.e., using $Q=1$, with $u_{1} = 1$ and $\xi_{1}= 0$.
Using the discretization in~\eqref{eq:inner_quadrature_rule} for the integrand $\xi \mapsto \pi(y+\sigma \xi)$ is straightforward
and yields an approximation of $\vzero$, denoted by $\hvzero$ and defined by
\begin{equation}\label{eq:v_0_hat_quadrature}
    \hvzero^Q(y) := \omega_{\sigma,d} \sum\limits_{q = 1}^{Q} u_{q} \pi(y+ \sigma \xi_{q}).
\end{equation}
Since $\vone$ has multiple representations~\eqref{eq:vone_firstvariant}--\eqref{eq:vone_secondvariant}, strategies for approximating $\vone$ are discussed below.

\paragraph{Gradient-free estimator}

Applying the quadrature~\eqref{eq:inner_quadrature_rule} to the first integral representation of $\vone$~\eqref{eq:vone_firstvariant} leads to a gradient-free approximation, 
\begin{equation}
\hat{v}_1^{\mathrm{gf}}(y) := \omega_{\sigma,d}\sum\limits_{q=1}^Q u_q(y+\sigma \xi_q)\pi(y+\sigma\xi_q).
\end{equation}

\paragraph{Stein estimator}
Using Stein's identity~\cite{Ste81} simplifies the representation in~\eqref{eq:vone_secondvariant} of $\vone$:
\begin{equation}\label{eq:stein_identity}
    \vone(y) = y\,\vzero(y) + \sigma^2 \omega_{\sigma,d} \int_{\mathbb{R}^d} \nabla \pi(y+\sigma \xi) \mathrm{d}\rho(\xi).
\end{equation}
Applying the quadrature~\eqref{eq:inner_quadrature_rule} to~\eqref{eq:stein_identity}, we derive the Stein estimator:
\begin{equation}
    \hat{v}^{SQ}_{1}(y) = y\,\hat{v}^Q_{0}(y) + \sigma^2 \omega_{\sigma,d} \sum_{q=1}^{Q}(u_{q} \pi(y + \sigma\xi_q))\, \nabla \log\pi(y+\sigma \xi_{q}),
\end{equation}
noting that $\nabla\pi = \pi\nabla\log\pi$, where $\nabla\log\pi$ is often referred to as the (Stein) %
score.
For the one-point inner quadrature rule, the Stein estimator reduces to
\begin{equation}\label{eqn:one_point_defs}
    \hat{v}_0^1(y) = \omega_{\sigma,d}\pi(y),\quad \hat{v}_1^{\mathrm{S1}}(y) = \omega_{\sigma,d}\,\pi(y)\bigl(y+ \sigma^2  \nabla \log\pi(y) \bigr).
\end{equation}
In particular, we see $\sigma^2 \nabla \log \pi(y)$ corresponds to $\hat{v}_1^{\mathrm{S1}}(y) /\hat{v}_0^1(y)  -y$; this closely echoes $\nabla\log v_0$ in~\eqref{eq:primitive_v_pair}.
Moreover, the one-point estimators can recover a Gaussian deconvolution of $\pi$: see Proposition~\ref{prop:gmm_msip_stein} for details. For this reason, we consider $\hat{v}_0^1$ and $\hat{v}_0^{\mathrm{S1}}$ to be \textit{Fredholm} estimators.

\subsection{MSIP invariance to normalization}

Now that we have shown strategies for practically implementing MSIP, the regularized MSIP dynamics can be implemented using the following iteration:
\begin{equation}\label{eq:lambda_approx_msip_dynamics}
    \Ytp = (1-\eta)\Yt + \eta\hat{\Psi}_{\MSIP, \lambda}(\Yt),
\end{equation}
where $\hat{\Psi}_{\MSIP, \lambda}$ is obtained by replacing $\vzero$ and $\vone$ by $\hat{v}_{0}$ and $\hat{v}_{1}$ respectively in~\eqref{eqn:def_msip_lambda}.%
\begin{equation}
 \hat{\Psi}_{\MSIP, \lambda}(Y) = \bm{W}_{\lambda}(\Y)^{-1}\bm{K}_{\lambda}(\Y)^{-1}\hat{\bm{v}}_{1}(Y).
\end{equation}
We detail this method in \cref{alg:lambda_msip}. Next, we show that this iteration is invariant to the normalizing constant of a target density $\pi$.
 
\begin{proposition}\label{prop:msip_invariance}
Let $\Psi_{\mathrm{MSIP}, \lambda}(\cdot\,;\pi)$ be the regularized MSIP map for density $\pi$. For any $\Y\in\gX^M$,
\begin{equation}
     \Psi_{\mathrm{MSIP}, \lambda}(\Y;\pi) = \Psi_{\mathrm{MSIP}, \lambda}(\Y;\;C \, \pi) \ \ \forall C > 0.
\end{equation}
    
\end{proposition}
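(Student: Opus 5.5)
The plan is a direct scaling argument: track how each factor in the definition \eqref{eqn:def_msip_lambda} of $\Psi_{\MSIP,\lambda}$ depends on the density, and check that the factors of $C$ cancel. Only linearity is needed; \Cref{thm:lambda_MSIP_preconditioner} is not used.

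First, I would record that the embeddings $\vzero$ and $\vone$ in \eqref{eq:vzero_def}--\eqref{eq:vone_def} are linear in the density. Replacing $\pi$ by $C\pi$ multiplies both integrals by $C$, so
\begin{equation*}
\vzerob(\Y;C\pi)=C\,\vzerob(\Y;\pi), \qquad \voneb(\Y;C\pi)=C\,\voneb(\Y;\pi).
\end{equation*}
Second, I would note that the kernel matrix $\bm{K}(\Y)$ depends only on the particles and the kernel, not on $\pi$. The same therefore holds for $\bm{K}_\lambda(\Y)=\bm{K}(\Y)+\lambda\bm{I}_M$, which is invertible for $\lambda>0$ because $\bm{K}(\Y)$ is positive semidefinite.

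Third, I would use \eqref{eq:optimal_w_lam} to get
\begin{equation*}
\bm{w}^\ast_\lambda(\Y;C\pi)=\bm{K}_\lambda(\Y)^{-1}\,C\,\vzerob(\Y;\pi)=C\,\bm{w}^\ast_\lambda(\Y;\pi).
\end{equation*}
Hence the diagonal matrix scales the same way, $\bm{W}^\ast_\lambda(\Y;C\pi)=C\,\bm{W}^\ast_\lambda(\Y;\pi)$, and its inverse scales as $C^{-1}$.

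Finally, I would substitute into \eqref{eqn:def_msip_lambda}:
\begin{equation*}
\Psi_{\MSIP,\lambda}(\Y;C\pi)=C^{-1}\bm{W}^\ast_\lambda(\Y;\pi)^{-1}\,\bm{K}_\lambda(\Y)^{-1}\,C\,\voneb(\Y;\pi)=\Psi_{\MSIP,\lambda}(\Y;\pi).
\end{equation*}
The scalar factors commute with all the matrices, so they cancel.

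The only delicate point is well-definedness. The map requires every entry of $\bm{w}^\ast_\lambda(\Y;\pi)$ to be nonzero so that $\WLY$ is invertible. I would handle this by stating that the map is defined exactly on configurations with nonzero regularized weights, and observing that this set does not depend on $C$, since the weights for $C\pi$ are $C>0$ times those for $\pi$. The identity then holds wherever either side is defined.

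I would close with a remark that the argument only uses linearity of $\pi\mapsto(\vzero,\vone)$. The estimators $\hat v_0^Q$, $\hat v_1^{\mathrm{gf}}$ and $\hat v_1^{SQ}$ are also linear in $\pi$: in the Stein estimator, $\pi(y+\sigma\xi_q)\nabla\log\pi(y+\sigma\xi_q)=\nabla\pi(y+\sigma\xi_q)$ scales by $C$. The practical map $\hat\Psi_{\MSIP,\lambda}$, and hence the iteration \eqref{eq:lambda_approx_msip_dynamics}, is therefore invariant to the normalizing constant as well.
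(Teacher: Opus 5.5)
Your proof is correct and uses essentially the same scaling argument as the paper: $v_0$, $v_1$ and the regularized weights all scale by $C$, while $\bm{K}_\lambda(\Y)$ does not depend on $\pi$, so the factors of $C$ cancel. The paper instead writes $\voneb(\Y)=\bm{V}_0(\Y)r(\Y)$ with the $C$-invariant ratio $r=v_1/v_0$ and shows that $\WLY^{-1}\bm{K}_\lambda(\Y)^{-1}\bm{V}_0(\Y)$ is invariant; your direct substitution avoids that detour, and your explicit handling of the nonzero-weight domain is a useful addition.
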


The invariance of $\hat{\Psi}_{\mathrm{MSIP},\lambda}$ to normalization reasons similarly to Proposition~\ref{prop:msip_invariance}. 
In essence, this invariance comes from the preconditioner $P_\lambda$ in~\eqref{eq:def_regularized_precond}. While it may seem at first glance that division by the weights $\wast_\lambda$ echoes self-normalization in, e.g., importance sampling or CBS, such parallels are largely superficial. Whereas the drift term for any particle $\yi$ in CBS is nearly affine in its assigned weight $w_i$,%
\footnote{The CBS drift is affine in the weights $\alpha\wi$ for $\alpha = 1/\sum \wi$.
Specifically, the Jacobian of the drift w.r.t.~$\bm{w}$ has $i$-th column $\alpha\bm{e}_i + \alpha^2 \bm{w}$. This approaches an inverse relationship for one weight $\wi$ if $\alpha w_i \approx 1$, but remains nearly affine in all other weights.%
}%
the dynamics of $\yi$ in MSIP are \textit{inversely related} to the kernel-optimal weight $\whati$.

\subsection{Related kernelized methods}
With details of our method in hand, we contrast it with other recent kernel methods used in sampling and optimization. 
One line of work 
considers interacting particle methods for Bayesian neural networks.
In these approaches, repulsion and diversity are enforced at the level of network predictions rather than directly on the weights; see, e.g.,~\cite{DaFo21}. Understanding the precise relationship between such function-space interactions and parameter-space interacting particle systems such as MSIP lies outside the scope of the present work.
Polarized CBS~\cite{BuRoWa25} enriches classical CBS with interaction mechanisms designed to improve exploration of multi-modal distributions. While polarized CBS shares certain similarities with our gradient-free MSIPGF, the two approaches arise from fundamentally different principles: the former remains centered around consensus-driven dynamics, whereas MSIPGF constructs particle interactions through kernelized approximations of nonlocal quantities such as $v_0$ and $v_1$. The theoretical analysis in~\cite{BuRoWa25} is primarily carried out in the large-$M$ regime.
Drifting~\cite{DeLiLiDuHe26} is a recent kernelized approach for generative modeling that constructs a vector field from the discrepancy between mean-shift maps associated with the target samples and the particle distribution. In contrast to MSIP, drifting is designed for sample-based generative modeling rather than for sampling from unnormalized densities.

%% file: sections/04_experiments.tex
\begin{figure}[h!]
\centering
    \includegraphics[width=\linewidth]{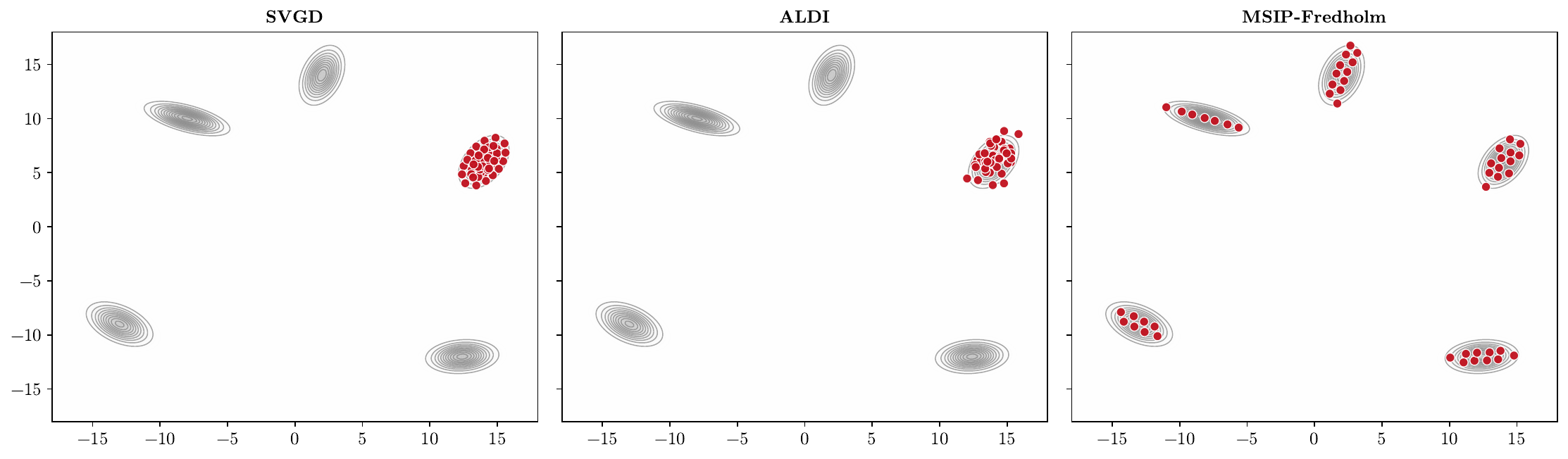}

\caption{
A comparison of our methods to other samplers on a mixture of five anisotropic Gaussians.
}
\label{fig:MSIP_for_GMMs_qual}
\end{figure}

\section{Experiments}
We now evaluate the proposed algorithms against established methods, including SVGD, ALDI, and CBS approaches. We use both synthetic distributions and real-world Bayesian posteriors---for error quantification, we normalize any weights to ensure unit sum.%
\footnote{All code used to generate the experiments is available at~\url{https://github.com/Nodes-and-Kernels/nak_torch}.}

\subsection{Mixtures of Gaussians}

We use Gaussian mixtures to assess each method's ability to capture multi-modality, anisotropy and separation between modes. Further, for mixtures of Gaussians, one can evaluate the MMD analytically, which we use to assess the ability of the proposed algorithm in MMD minimization.

In the first simulation, we evaluate the proposed methods on a multi-modal target distribution defined by a five-component anisotropic Gaussian mixture in $\mathbb{R}^2$. We compare SVGD and gradient-informed ALDI to MSIP-Fredholm. For each number of particles $M$, we run multiple independent initializations from a Gaussian $\mathcal{N}(\mu, \mathbb{I}_{2})$ and evolve the particles for a fixed number of iterations $T$, where  $\mu = (18, 18)$.
\Cref{fig:MSIP_for_GMMs_qual} shows the configuration corresponding to the last iteration of each of the three algorithms. We observe that MSIP-Fredholm captures all modes of the distribution, while SVGD and ALDI suffer from mode collapse. This qualitative observation is validated by the quantitative comparison given in \Cref{fig:algos_2dgmms_mmd}, where the MMD at the last iteration of each algorithm is shown as a function of the particle count. We see MSIP outperforms SVGD and ALDI by a significant margin, especially when the number of particles is large.

\begin{figure}[h]
    \centering
\includegraphics[width=0.5\linewidth]{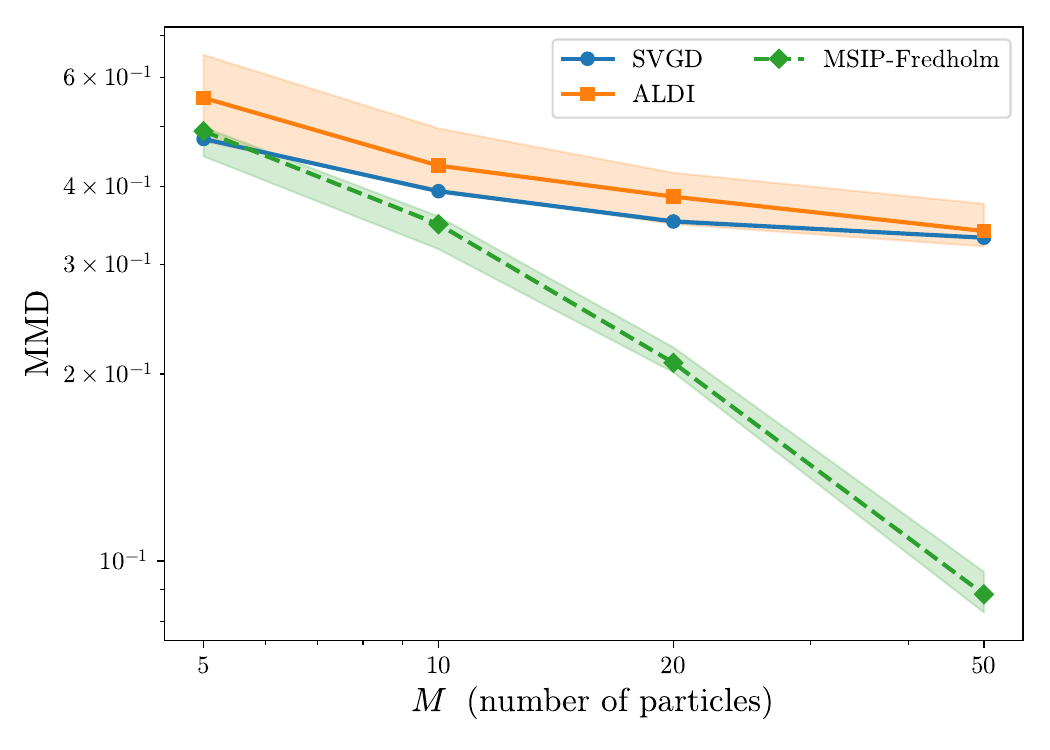}
    \caption{MMD decay with respect to the number of particles $M$ for three algorithms, over 20 independent trials. The shaded regions represent the 90\% confidence intervals over the trials.}
    \label{fig:algos_2dgmms_mmd}
\end{figure}

\subsection{Benchmark sampling problems}\label{sec:distribution_experiments}
In the second batch of experiments, we evaluate MSIP on a broader collection of target distributions. In particular, we compare several implementations of MSIP with state-of-the-art algorithms such as SVGD, ALDI, and CBS. More precisely, we consider three gradient-informed implementations defined in \Cref{sec:vzero_vone_estimators}:
i) MSIP-F, which is based on the Fredholm estimators~\eqref{eqn:one_point_defs}; %
ii) MSIP GI-1, which is based on the Stein estimator, where the inner quadrature rule \eqref{eq:inner_quadrature_rule} corresponds to Monte Carlo sampling from the Gaussian distribution $\mathcal{N}(0, \sigma \mathbb{I}_d)$ with $Q = 1$;
iii) MSIP-GI-10, which is also based on the Stein estimator, with the same inner quadrature rule but using $Q = 10$.
We also consider a gradient-free implementation defined in \Cref{sec:vzero_vone_estimators}: MSIP-GF, which is based on the estimator $\hat{v}_1^{\mathrm{gf}}$, where the inner quadrature rule \eqref{eq:inner_quadrature_rule} corresponds to Monte Carlo sampling from the Gaussian distribution $\mathcal{N}(0, \sigma \mathbb{I}_d)$ with $Q = 10$. \Cref{tab:ksd_synthetic} reports the resulting kernel Stein discrepancy (KSD) values on a collection of synthetic distributions with varying anisotropies and multi-modal behavior.

\begin{figure}[h]
    \centering
    \includegraphics[width=0.55\linewidth, clip, trim={0 0 0 1.3cm}]{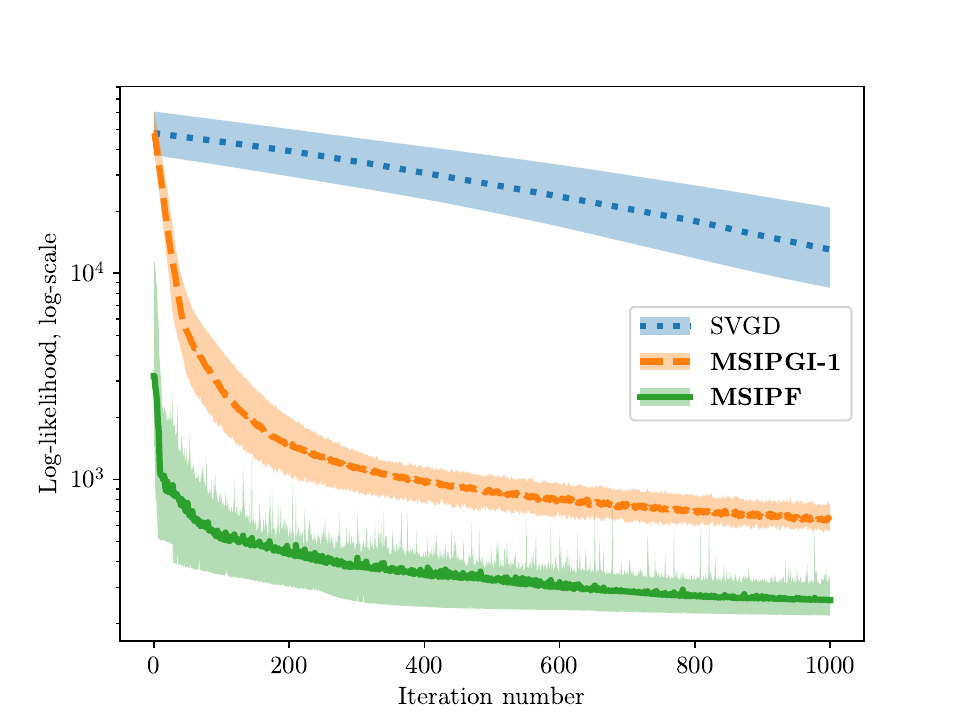}
    \caption{
    Log-likelihood of the neural network ensemble in the TwoMoons example over 20 independent trials. The shaded regions represent the 90\% confidence interval as a function of the iteration number.}
    \label{fig:convergence_bnn}
\end{figure}

\begin{table}[ht]
{
    \caption{Error of different algorithms for synthetic distributions using KSD with an inverse multi-quadric kernel. We report mean and standard deviation over many trials; the smallest row-wise mean is bolded. Empty cells correspond to an algorithm not converging.}
    \tiny
    \centering
    \label{tab:ksd_synthetic}
    \begin{tabular}{@{}l c c c c c c c c@{}}\toprule
    & & \multicolumn{5}{c}{Gradient-informed algorithms} & \multicolumn{2}{c}{Gradient-free algorithms} \\
    \cmidrule(lr){3-7}\cmidrule(lr){8-9}
     & Dim. & \textbf{MSIPF} & \textbf{MSIPGI-1} & \textbf{MSIPGI-10} & SVGD & ALDI & \textbf{MSIPGF} & CBS \\\midrule
    GMM2 & 2 & 1.000 (0.056) & 1.489 (0.499) & 3.126 (5.286) & 1.223 (0.039) & \textbf{0.930 (0.094)} & 6.014 (5.028) & 1.103 (0.090) \\
    GMM5 & 5 & 1.510 (0.122) & 3.495 (4.684) & \textbf{1.375 (0.022)} & 1.499 (0.004) & 1.516 (0.054) & 6.916 (12.33) & 3.293 (0.336) \\
    GMM10 & 10 & 2.229 (0.002) & \textbf{1.970 (0.010)} & 2.041 (0.039) & 2.172 (0.002) & 2.185 (0.039) & 3.777 (2.145) & 3.899 (0.369) \\
    GMM20 & 20 & 3.395 (1.531e-05) & \textbf{3.012 (0.030)} & 3.014 (0.040) & 3.060 (0.003) & 3.355 (0.171) & 3.110 (0.074) & 5.061 (0.345) \\
    Joker & 2 & 10.26 (0.474) & \textbf{8.493 (0.016)} & 8.563 (0.068) & 11.11 (1.579) & 9.286 (0.185) & 8.732 (0.369) & 10.02 (1.475) \\
    Funnel2 & 2 & 9.104 (0.085) & \textbf{8.618 (0.022)} & 8.620 (0.017) & 9.073 (0.145) & 9.008 (0.091) & 8.763 (0.019) & 95.40 (176.8) \\
    Funnel5 & 5 & 30.44 (5.118) & \textbf{14.25 (0.021)} & 14.31 (0.029) & 18.64 (2.574) & 14.43 (0.210) & 14.30 (0.021) & 14.98 (0.615) \\
    Funnel10 & 10 & 83.79 (17.37) & \textbf{20.24 (0.039)} & 20.48 (0.033) & 24.06 (1.593) & 20.34 (0.167) & 20.40 (0.075) & 21.86 (1.396) \\
    Funnel20 & 20 & 131.0 (16.90) & 28.92 (0.160) & 29.97 (0.408) & 67.36 (22.99) & \textbf{28.66 (0.122)} & 29.86 (0.377) & 34.05 (3.475) \\
    Funnel50 & 50 & 236.4 (0.617) & \textbf{46.68 (0.687)} & 49.75 (1.725) & 242.5 (142.7) & --- & 47.64 (0.887) & 58.30 (10.52) \\
    Himmelblau & 2 & 8.129 (0.182) & 9.778 (1.952) & \textbf{7.671 (0.188)} & 14.24 (0.311) & 14.33 (2.558) & 8.071 (0.335) & --- \\
    \bottomrule
    \end{tabular}
}
\end{table}

\begin{table}[ht]
    \centering
    \tiny
    \caption{Error on benchmark problems in log-likelihood (LL) and kernel Stein discrepancy (KSD). Examples are suffixed by the dimension $d$ of the target. An empty cell denotes failure of the algorithm.}
    \label{tab:ksd_ll_benchmarks}
    
    \begin{tabular}{@{}l l c c c c c c@{}}\toprule
        & & \textbf{MSIPF} & \textbf{MSIPGI-1} & SVGD & ALDI & \textbf{MSIPGF} & CBS \\\midrule
    \multirow{5}{*}{LL} &
    Schools10 & \textbf{6.553 (1.63)} & 12.76 (1.04) & 11.89 (2.67) & 9.152 (0.800) & 21.68 (11.2) & 9.823 (2.43)\\
    & COVID51 & \textbf{4616 (409)} & 6782 (155) & 471239 (37239) & --- & 780936 (42414) & 655235 (68207)\\
    & CovType56 & 293827 (68258) & 645767 (59017) & \textbf{287376 (1911)} & --- & 332276 (11689) & ---\\
    & ABPDE64 & \textbf{546 (140)} & 1062 (563) & 3203 (4292) & --- & 5447 (6977) & ---\\
    & TwoMoons201 & \textbf{258.60 (27.07)} & 638.76 (50.87) & 13653 (3594) & --- & 45670 (6570) & ---\\
    \midrule

    \multirow{5}{*}{KSD} &
    Schools10 & 28.7 (7.5) & 9.6 (0.18) & 9.50 (1.1) & \textbf{9.04 (0.04)} & 20.2 (18) & 10.6 (1.8)\\

    & COVID51 & 507 (74) & \textbf{106 (1.7)} & 37817 (1028) & --- & 42176 (850) & 45396 (1393)\\

    & CovType56 & 35140 (21703) & 15373 (1216) & \textbf{1865.2 (433)} & --- & 3667 (1235) & ---\\

    & ABPDE64 & 164 (20) & \textbf{42.3 (2.3)} & 818 (1782) & --- & 4293 (9609) & ---\\

    & TwoMoons201 & 267 (50) & \textbf{168 (15)} & 449 (76) & --- & 801 (45) & ---\\

    \bottomrule
    \end{tabular}
\end{table}

We observe that the different variants of MSIP consistently achieve either the lowest KSD values or values extremely close to the best-performing method across essentially all benchmark distributions. In contrast, competing approaches exhibit substantially less robustness across geometries and dimensions. In particular, SVGD frequently underperforms relative to MSIP variants, not only on challenging targets such as funnel, joker, and Himmelblau distributions, but also on several Gaussian mixture models, especially when the evaluation bandwidth matches the characteristic scale used to construct the targets. These observations also highlight a limitation of KSD-based evaluation: on multi-modal distributions, small KSD values do not necessarily imply accurate global exploration or faithful mode allocation. This phenomenon appears sharply when examining visualizations of the two-dimensional distributions, as in \cref{sec:two_dim_viz}. Similarly, ALDI performs strongly on funnel distributions, where its affine-invariant structure is advantageous, but is generally less competitive on the remaining targets. 
By contrast, the gradient-free variant MSIP-GF remains remarkably competitive across nearly all synthetic experiments, including difficult high-dimensional settings. Overall, the MSIP family is the only class of methods in this benchmark that consistently maintains strong and stable performance simultaneously across multi-modal, anisotropic, heavy-tailed, and high-dimensional targets.

\cref{tab:ksd_ll_benchmarks} presents a broad evaluation of our methods on non-synthetic target distributions, where any large deviation suggests near-failure in a given trial.\footnote{For each (weighted) point set and unnormalized target density $\widetilde{\pi}$, we calculate the log-likelihood as $\mathcal{L}(\Y,\w;\widetilde{\pi})\coloneqq-\sum_{k=1}^M w_k\log\widetilde{\pi}(y_k)$; this can be negative for unnormalized $\widetilde{\pi}$.}
Remarkably, virtually all other methods often fail without careful tuning of step size across problems; the ALDI results, in particular, show the peril of algorithmic sensitivity to step size. As a case study, \Cref{fig:convergence_bnn} demonstrates the convergence in log-likelihood of MSIP for the Bayesian neural network example of classification in a problem with $d=201$ parameters; we see that both MSIP-Fredholm and MSIP-GI converge much quicker than the SVGD results, showing the strength of our preconditioned dynamics.

Note that the costs of MSIP-Fredholm and the one-point MSIP GI-1 are both \emph{equivalent} in number of density evaluations to SVGD. On the other hand, the inner quadrature rules used for MSIP GI-10 and MSIP GF both require more information from the target $\pi$. The non-local information these estimators glean comes from a user-specified budget; the corresponding evaluations, however, can be performed in parallel. Indeed, vectorization and mapping operations common in machine learning toolkits (e.g., the \texttt{vmap} transformation in torch/JAX) yield virtually identical cost to MSIP-Fredholm for small quadrature rules, depending on the problem dimension $d$ and the number of particles $M$.

In \Cref{sec:two_dim_viz}, we provide several two-dimensional case studies comparing our contributed algorithms to many different other methods in final configurations for comparable computational cost. In these two-dimensional examples, we show consistently high-quality quadrature rules produced by MSIP methods. Further, we see that MSIP methods produce well-spaced point sets that go beyond the typical independent sampling approaches considered by most other versatile point generation approaches.

%% file: sections/05_conclusions.tex
\section{Conclusion}
We have shown that MSIP is an effective new approach for producing \emph{weighted quadratures} of distributions known only through their unnormalized density. 
 {To our knowledge, it is the most scalable method for MMD minimization in this setting.}
Our experiments demonstrate that MSIP algorithms enjoy robust performance over a very broad range of target distributions, achieving approximation quality that is consistently competitive with---and often superior to---current state-of-the-art sampling approaches. 
A key strength of MSIP is that its dynamics balance systematic modal coverage (i.e., quickly finding all modes of a target) with accurate representation of the shape and extent of each mode. 

Underlying MSIP is the internal quadrature rule, a distinctive degree of freedom not afforded in typical approaches. Through the quantities $v_0$ and $v_1$, this quadrature highlights non-local information, in lieu of dynamics driven by the score function alone. Our experiments suggest that accurately approximating these global kernel-based quantities can lead to substantial improvements in performance, particularly for challenging multi-modal and anisotropic targets.

Future work will focus on establishing theoretical guarantees for MSIP, including convergence and approximation results, as the nonconvex optimization at the core of the method remains challenging analytically. We also plan to advance systematic and adaptive strategies for selecting the internal quadrature rule and associated kernel hyperparameters.

\paragraph{Acknowledgments}

DS acknowledges support from a 2025--2026 MathWorks Fellowship. AB and YMM acknowledge support from the ExxonMobil Technology and Engineering Company. DS and YMM acknowledge support from the US Department of Energy (DOE), Office of Science, Office of Advanced Scientific Computing Research (ASCR), via the M2dt MMICC center under award number DE-SC0023187 and through the Scientific Discovery through Advanced Computing (SciDAC) FASTMath Institute, under contract number DE-AC52-07NA27344. AB and YMM also acknowledge support from DOE ASCR under award numbers DE-SC0023188 and DE-SC0026245.

\newpage

%% file: sections/appendix.tex
\section{Numerics}
\subsection{The algorithm}

\begin{algorithm}[H]
\caption{Regularized MSIP \label{alg:lambda_msip}}
\KwIn{Density $\pi$, step size $\eta$, number of iterations $T$, regularization parameter $\lambda$, hybridization rate $\gamma$, inner quadrature rule $(u_q,\xi_q)_{q=1}^Q\subset\RR\times\gX$, initialization $Y^{(0)} = \{y_1^{(0)}, \dots, y_M^{(0)}\}$}
\KwOut{Last configuration $Y^{(T)}$}

\For{$t = 0, 1, \dots, T-1$}{

    \For{$i =  1, \dots, M$}{
    $\hat{v}_{0}(y_i) \gets \omega_{\sigma,d} \sum_{q = 1}^{Q} u_{q} \pi(y_i+ \sigma \xi_{q})$ \;

    $\hat{v}_{1}^{H}(y_i) \gets  (1-\gamma) \hat{v}_{1}^{\mathrm{gf}}(y) + \gamma \hat{v}_{1}^{S}(y)$\;

    }

    $\bm{K}_{\lambda}(Y) \gets \bm{K}(Y) + \lambda \bm{I}_{M}$ \;

    $\hat{\bm{w}}_{\lambda}(Y) \gets \bm{K}_{\lambda}({Y})^{-1}\hat{\bm{v}}_{0}(Y) $\;

    $\WLY \gets \mathrm{diag}(\wlY)$ \;

    $\Psi^{(t)} \gets (\bm{K}_\lambda(\Y)\WLY)^{-1} \hat{\bm{v}}_1^{H}(\Y)$\;

    $Y^{(t+1)} \gets (1-\eta) \, Y^{(t)} + \eta \Psi^{(t)}$\;%

}

\end{algorithm}

\subsection{MSIP for mixtures of Gaussians}

Consider a mixture of Gaussians
\begin{equation}
\pi(x) = \sum_{k=1}^{K} m_k \mathcal{N}(x;\, \mu_k,\, \Sigma_k)
\end{equation}
with
\begin{equation}
\mathcal{N}(x ;\,\mu_k, \Sigma_k) := \frac{1}{(2\pi)^{d/2} |\Sigma_k|^{1/2}} \exp\left( -\frac{1}{2} (x - \mu_k)^T \Sigma_k^{-1} (x - \mu_k) \right)
\end{equation}
We consider the kernel $\kappa_\sigma(x, y) = \exp(-\frac{1}{2\sigma^2}\|x-y\|^2) = Z_{\sigma} \mathcal{N}(x;\, y, \sigma^2 I)$ for $Z_\sigma = (2\pi\sigma^2)^{d/2}$, and we obtain
\begin{align}
v_0(y) = \int \kappa_\sigma(x, y) \pi(x) \mathrm{d}x= & Z_{\sigma}\sum_{k=1}^{K} m_k \mathcal{N}(y \mid \mu_k, \widetilde{\Sigma}_k)\\ = & \frac{Z_{\sigma}}{(2\pi)^{d/2} }\sum_{k=1}^{K} \frac{m_k}{|\widetilde{\Sigma}_k|^{1/2}} \exp\left( -\frac{1}{2} (y - \mu_k)^T \widetilde{\Sigma}_k^{-1} (y - \mu_k) \right)
\end{align}
where $\widetilde{\Sigma}_k = \Sigma_k + \sigma^2 I$.
Now, let $\widetilde{\Sigma}_k = L_k L_k^\top$ be the Cholesky decomposition of $\widetilde{\Sigma}_k$ and let $z_k := L_k^{-1}(y - \mu_k)$. We have $\log |\widetilde{\Sigma}_k|^{1/2} = \sum_{j=1}^d \log [L_k]_{jj}$ and, thus,
\begin{equation}
\log v_0(y) = \log \sum_{k=1}^{K} \exp\left( \log m_k - \sum_{j=1}^d \log [L_k]_{jj} - \frac{1}{2} \|z_k\|_2^2 \right) + d\log \sigma
\end{equation}
Therefore,
\begin{equation}
\nabla \log v_0(y) = \sum_{k=1}^{K} r_k(y) \left( - \widetilde{\Sigma}_k^{-1} (y - \mu_k) \right),
\end{equation}
where
\begin{equation}
r_k(y) := \frac{m_k\, \mathcal{N}(y ;\, \mu_k,\, \widetilde{\Sigma}_k)}{\sum_{i=1}^{K} m_i\, \mathcal{N}(y ;\, \mu_i,\, \widetilde{\Sigma}_j)}
\end{equation}
Using the Cholesky factors, $\widetilde{\Sigma}_k^{-1}(y - \mu_k) = L_k^{-\top} z_k$, we get
\begin{equation}
\nabla \log v_0(y) = - \sum_{k=1}^{K} r_k(y) L_k^{-\top} z_k.
\end{equation}
This last identity is purely for computational convenience.

\subsection{Experimental details}\label{sec:hyperparams}
We now discuss the details of the experiments provided in~\Cref{sec:distribution_experiments}. All experiments were performed on standard consumer-grade CPUs, except for ABPDE, which was performed using an RTX4090. All code was implemented in torch.

\paragraph{GMM} The datasets prefixed by GMM are chosen Gaussian mixture models in arbitrary dimension, where GMM-$d$ has components $\mathcal{N}(\cdot;\mu_k,\sigma^2 I_d)$ for $k=1,\ldots,5$ with $\mu_k = 7.5$ and $\sigma^2 = 0.5$.

\paragraph{Joker} The Joker distribution is a mixture of two Gaussian densities of different anisotropy and a ``banana'' distribution that is \textit{not sub-Gaussian}---this is a preliminary test of robustness to non-Gaussianity. See \cref{fig:2dvis_joker} for a visualization.

\paragraph{Funnel} The funnel distributions, originally from~\cite{Nea03}, are classic test cases used in benchmarking sampling algorithms in the presence of high anisotropy; these have been used to benchmark methods in, e.g.,~\cite{MaMa24}. Up to a constant, the density is expressed as:
\[\pi(x_1,x_{2:d}) \propto \mathcal{N}(x_1;\mu,\sigma^2 I)\mathcal{N}\big(x_{2:d};0,\exp(x_1)I\big).\]

\paragraph{Himmelblau} Similarly, the Himmelblau function~\cite{Him72}, with unnormalized density
\[
\pi(x)\propto \exp\left(-(x_1^2 + x_2 - 11)^2 - (x_1 + x_2^2 - 7)^2\right)
\]
is a two-dimensional test case common for checking the behavior of sampling methods~\cite{IgCi23}.

\paragraph{Schools} We use the ``eight schools'' posterior, implemented in \texttt{posteriordb}~\cite{MaBuVe23}, as described in~\cite{GeHi06}.

\paragraph{COVID} Similar to the above, we use a posterior from a report describing the spread and contagion of COVID-19~\cite{FlMiGa20}, as implemented in \texttt{posteriordb}.

\paragraph{CovType} We perform hierarchical Bayesian logistic regression on a large dataset with an identical setup to~\cite{LiWa16}. That is, mini-batching 50 random data samples out of the entire dataset at each step of the algorithm.

\paragraph{ABPDE} We use a PDE-constrained inverse problem
intended as a benchmark for Bayesian algorithms, as proposed by Aristoff and Bangerth~\cite{ArBa23}. In brief, the Bayesian problem here is inference of piecewise-constant material properties on a two-dimensional square domain, in a linear elasticity PDE. The map from material properties to the PDE solution (strain/displacement) is nonlinear, so the posterior is intrinsically non-Gaussian. Non-Gaussianity is amplified by treating the material properties on a logarithmic scale (to enforce positivity).
To avoid an inverse crime, we add Gaussian noise of the appropriate amount to the proposed optimum.

\paragraph{TwoMoons} We consider the optimization of Bayesian neural networks for a two-dimensional classification task. The network is a simple multilayer perception with width 50 and one hidden layer. We use mini batches of size 64 at each step of each algorithm.

We provide several cross-algorithm hyperparameters in~\cref{app_tab:hyperparams}. For each example, we take 25 particles in every method, CBS uses inverse temperature $\beta=0.9$, and we use ten Monte Carlo samples for the inner quadrature rule of MSIPGF as well as MSIPGI-10.

\begin{table}[ht!]
    \centering
    \caption{Algorithm hyperparameters}
    \label{app_tab:hyperparams}{\tiny
    \begin{tabular}{@{}l c c c c c c c c@{}}\toprule
        Example & Dimension & Step Size & MSIP Bandwidth & KSD Bandwidth & Steps & Trials & Bounds\\\midrule
        GMM5 & 5 & $0.5$ & $0.5$ & $0.5$ & $1000$ & 10 & $(-10^3, 10^3)$\\
        GMM10 & 10 & $0.5$ & $0.5$ & $0.5$ & $1000$ & 10 & $(-10^3, 10^3)$\\
        GMM20 & 20 & $0.5$ & $0.5$ & $0.5$ & $1000$ & 10 & $(-10^3, 10^3)$\\
        Joker& 2 &  $0.5$ & $0.1$ & $0.1$ & $1000$ & 10 & $(-10^3, 10^3)$\\
        Funnel2 & 2 & $0.5$ & $0.1$ & $0.1$ & $1000$ & 10 & $(-10^3, 10^3)$\\
        Funnel5 & 5 & $0.05$ & $0.1$ & $0.1$ & $1000$ & 10 & $(-10^3, 10^3)$\\
        Funnel10 & 10 & $0.05$ & $0.1$ & $0.1$ & $1000$ & 10 & $(-10^3, 10^3)$\\
        Funnel20 & 20 & $0.05$ & $0.1$ & $0.1$ & $1000$ & 10 & $(-10^3, 10^3)$\\
        Himmelblau & 2 & $0.1$ & $0.05$ & $0.1$ & $1000$ & 10 & $(-10^3, 10^3)$\\
        Schools & 10 & $10^{-3}$ & 0.1 & 0.1 & 1000 & 20 & $(-10^3, 10^3)$\\
        COVID & 51 & $10^{-3}$ & 0.1 & 0.1 & 500 & 10 & $(-10^3, 10^3)$\\
        CovType & 56 & $10^{-3}$ & 0.1 & 0.1 & 1000 & 20 & $(-10^3, 10^3)$\\
        ABPDE & 64 & $10^{-3}$ & 0.1 & 0.1 & 1000 & 20 & $(-10^3, 10^3)$\\
        TwoMoons & 201 & $10^{-3}$ & 0.1 & 0.1 & 1000 & 20 & $(-10^3, 10^3)$\\

        \bottomrule
    \end{tabular}}
\end{table}

\section{Proofs}\label{sec:proofs}

\subsection[Proof of MSIP Preconditioner]{Proof of~\Cref{thm:lambda_MSIP_preconditioner}}

First, let $C_\pi := \int_{\X} \int_{\X} \kappa(x,x^{\prime}) \dpix \mathrm{d}\pi(x^{\prime})$. Then, observe that for $\Y\in \X^{M}$ we have
\begin{equation}
    F_{M, \lambda}(\Y) = \frac{1}{2} \bigg( C_\pi - 2 \langle \what, \vzerov(\Y) \rangle + \langle \what, \Kyl \what \rangle  \bigg),
\end{equation}
where $\hat{\bm{w}}(\Y)$ is defined by \eqref{eq:optimal_w_lam} (omitting the $\lambda$ subscript for concision), and $\vzero$ is given by \eqref{eq:vzero_def}. Using \eqref{eq:optimal_w_lam}, we see that
\begin{equation}\label{eq:kernel_ids}
    \langle \hat{\bm{w}}(\Y), \vzerov(\Y) \rangle = \langle \vzerov(\Y), \Kyl^{-1} \vzerov(\Y) \rangle = \langle \hat{\bm{w}}(\Y), \Kyl \hat{\bm{w}}(\Y) \rangle.
\end{equation}
In particular, the expression of $F_{M,\lambda}(\Y)$ simplifies further to
\begin{align*}
    F_{M, \lambda}(\Y) & = \frac{1}{2} \bigg(C_\pi - \langle \what, \vzerov(\Y) \rangle \bigg) = \frac{1}{2} \bigg( C_\pi - \langle \what, \Kyl \what \rangle \bigg).
\end{align*}

In the following, we proceed to the calculation of the gradient of $F_{M}$. First, for a configuration $Y\in\X^M$, we define $y_{ij}$ as the scalar in the $j$th dimension of node $\yi$. Now, we seek the gradient $\nabla F_{M}\in\RR^{M\times d}$. In particular, for $i \in [M]$ and $j \in [d]$, we use $\nabla_{ij} F_{M}(\Y)$ to denote the $(i,j)$ entry of $\nabla F_{M}(Y)$ corresponding to input $\yij$ evaluated at $\Y\in\mathbb{R}^{M \times d}$.
Moreover, we define $\nabla_{ij} \Kyl\in\RR^{M\times M}$  to be the differentiation of each element of matrix $\Kyl(\Y)$ with respect to the $j$th coordinate of vector $\yi$. Similarly, for kernel mean embedding $\vzerov: \mathcal{X}^{M} \rightarrow \mathbb{R}^{M}$ we denote by $\nabla_{ij} \vzerov$ the vector obtained by differentiating each element of $\vzerov$ with respect to the $j$th coordinate of vector $\yi$. In other words, we have
\begin{equation}
    \Dij F_{M, \lambda} = \frac{\partial}{\partial \yij} F_{M, \lambda}(\Y),\quad \Dij \Kyl = \frac{\partial}{\partial \yij} \Kyl , \quad \Dij \bm{v}(\Y) = \frac{\partial}{\partial \yij} \bm{v}(\Y)
\end{equation}

Using \eqref{eq:kernel_ids} and matrix calculus, we have
\begin{equation}\label{eq:gradient_optimal_mmd_identity}
    2\Dij F_{M, \lambda}(\Y) = - 2\langle \what,\,\Kyl\Dij\what\rangle - \langle \what,\,\Dij\Kyl\,\what \rangle .
\end{equation}
Since $\what = \Kyl^{-1}\vzerov(\Y) $, we get
\begin{equation*}
\Dij \what = \Kyl^{-1}\Dij \vzerov(\Y) -\Kyl^{-1}\,\Dij\Kyl\,\Kyl^{-1}\vzerov(\Y),
\end{equation*}
thus
\begin{equation*}
    \Kyl\Dij\what = \Dij\vzerov(\Y) - \Dij\Kyl\,\what,
\end{equation*}
so that
\begin{equation}
    \langle\what,\ \Kyl \nabla_{ij} \what\rangle = \langle\what,\ \nabla_{ij} \vzerov(\Y)\rangle -\langle \what,\ \nabla_{ij}\Kyl \what\rangle.
\end{equation}
Therefore, we have
\begin{gather}\label{eq:grad_F_pi_first}
\begin{split}
2\nabla_{ij} F_{M, \lambda}(\Y) &= -\,2\Big(\,\langle\what,\, \Dij\bm{v}_0(\Y)\rangle - \langle\what,\,\Dij\Kyl\,\what\rangle\,\Big) \\
& \quad \: - \langle \what,\, \Dij\Kyl\,\what\rangle \\
&= -2\langle \what,\,\Dij \bm{v}_0(\Y)\rangle + \langle\what,\,\Dij\Kyl\,\what\rangle.
\end{split}
\end{gather}
Now recall that %
\begin{equation*}
    [\Dij\Kyl]_{m_1,m_2} = \Dij\kappa(y_{m_1}, y_{m_2}),
\end{equation*}
for all $m_1,\,m_2 \in [M]$; the regularization parameter $\lambda$ disappears here as it is not a function of $Y$. Therefore, for the squared-exponential $\kappa$ with $\bar{\kappa}(x,y) := \sigma^{-2}\kappa(x,y)$,
\begin{align}
\nabla_{ij} \kappa(y_{m_1},y_{m_2}) & = \delta_{i,m_1}(y_{m_2j} - \yij)\bar{\kappa}(\yi,y_{m_2}) + \delta_{i,m_2}(y_{m_1j} - \yij)\bar{\kappa}(y_{m_1}, \yi) \\
& = \sigma^{-2} \Big( \delta_{i,m_1}(y_{m_2j} - \yij)\kappa(\yi,y_{m_2}) + \delta_{i,m_2}(y_{m_1j} - \yij)\kappa(y_{m_1}, \yi)\Big)
\end{align}

First, we denote $[\bm{a}\otimes\bm{b}]_{ij} = a_ib_j$ as the outer product between vectors $\bm{a}$ and $\bm{b}$ with possibly different dimensions. We also denote $\bm{a}\odot\bm{b}$ as the elementwise (i.e., Hadamard) product between $\bm{a}$ and $\bm{b}$ with identical dimensions. Then, we have
\begin{equation*}
        \nabla_{ij} \Kyl = \bij\otimes \bm{e}_i + \bm{e}_i\otimes \bij,
\end{equation*}
where $\bm{e}_i$ is the $i$th elementary vector, i.e., $[\bm{e}_i]_{m} = \delta_{im},$ and we use the expression of the squared exponential kernel
to get definition
\begin{equation*}
        \bij := \sigma^{-2}(\Y_j - \yij\bm{1})\odot \Ki = \sigma^{-2}\Big(\Y_{j}\odot \Ki - \yij \Ki\Big),
\end{equation*}
with $\Y_{j} \in \RR^{M}$ is the vector containing the $j$-th entry of each $\yi$, identical to the $j$th column of the matrix $\Y$, and $\Ki = \Km(\Y)\bm{e}_i$. Then,
\begin{align}\label{eq:w_hat_grad_K_w_hat_final}
\langle\what,\,\Dij\Kyl\,\what\rangle &= \langle\what,\, (\bij\otimes\bm{e}_i + \bm{e}_i\otimes\bij)\what\rangle\nonumber\\
&= 2 \whati \langle \what,\,\bij\rangle \nonumber \\
&= 2 \sigma^{-2} \whati \Big( \langle\what,\,\Y_{j}\odot \Ki - \yij \Ki\rangle\Big) \nonumber \\
&= 2 \sigma^{-2} \whati \Big( \langle\what,\, \Y_{j}\odot \Ki\rangle - \yij\langle\what,\, \Ki\rangle \Big)
\end{align}
 For $m \in [M]$, we recall that $\nabla\kappa$ is bounded on $\X\times\X$ to prove
 \begin{align*}
\Dij \vzerov(\Y) &= \left(\Dij \vzero(\yi)\right)\bm{e}_i\\
\Dij \vzero(\yi)&= \int_{\X} \Dij \kappa( x,\yi) \dpix \\
& = \int_{\X}  (\xj-\yij)\bar{\kappa}(x,\yi) \dpix\\
& = \sigma^{-2}\Big(\int_{\X}    \xj\kappa( x,\yi) \dpix -\yij  \vzero(\yi) \Big),
 \end{align*}
where $\xj\in\RR$ is the $j$th entry of integration variable $x\in\RR^d$, and we recall that $v_{0}(y)= \int_{\X} \kappa(x,y) \dpix$.
This equivalence is precisely due to the fact that $\kappa\propto\bar{\kappa}$. For derivative kernel $\bar{\kappa}$ as discussed in \cite{BeShMa25}, this identity would not hold. We then have
\begin{equation*}
    \Dij \vzerov(\Y) =  \sigma^{-2}\left( [\vonev(\yi)]_{j}  - \yij\vzero(\yi)\right)\bm{e}_i.
\end{equation*}

Thus,
\begin{equation}\label{eq:w_hat_grad_v_final}
    \langle\what,\,\nabla_{ij} \bm{v}_0(\Y)\rangle = \sigma^{-2}\whati\left( [\vonev(\yi)]_{j}  - \yij\vzero(\yi)\right).
\end{equation}
Combining~\eqref{eq:grad_F_pi_first}, \eqref{eq:w_hat_grad_K_w_hat_final} and \eqref{eq:w_hat_grad_v_final}, we obtain
\begin{align}\label{eq:2times_gradient_FM}
    2\Dij F_{M, \lambda}(\Y) &= 2\sigma^{-2}\whati\left(-[\vonev(\yi)]_{j}  + \yij\vzero(\yi)+ \langle\what,\, \Y_{j}\odot\Ki\rangle - \yij\langle\what,\,\Ki\rangle
    \right) \nonumber \\
    &= 2\sigma^{-2}\whati\left( -[\vonev(\yi)]_{j} + \yij \vzero(\yi) - \yij \langle\what,\,\Ki\rangle + \langle\what,\,\Y_{j}\odot\Ki\rangle
    \right)
\end{align}
Now,
\begin{align}%
    \yij \vzero(\yi) - \yij \langle\what,\,\Ki\rangle &= \yij \left( -\sum\limits_{m=1}^{M}\whatms\kappa(\yi,\yms) + \int_{\X} \kappa(x,\yi) \dpix \right)\nonumber\\
   \label{eq:equiv_w_lam}& =  \lambda  \whati \yij    \\
   & = \big[ \lambda   \bm{W}(\Y)\Y \big]_{i,j}    .
\end{align}
The identity used in \eqref{eq:equiv_w_lam} is precisely a rearrangement of the fact that $\bm{K}(\Y)\hat{\bm{w}} + \lambda\hat{\bm{w}} = \vzerov(\Y)$.
Using the symmetry of $\kappa$,%
\begin{align}\label{eq:what_yj_hadamard_K}
    \langle\what,\, \Y_{j}\odot \Ki\rangle &= \sum_{m=1}^M \whatms\ymsj\kappa(\yms,\yi) \nonumber\\
    &= \sum_{m=1}^M \kappa(\yi,\yms)\whatms\ymsj \nonumber\\
    &= \langle\bm{e}_i,\, \Km(\Y)\bm{W}(\Y)\Y_j\rangle \nonumber\\
    &= \left[\,\Km(\Y)\bm{W}(\Y)\Y\right]_{i,j}
\end{align}
where $\bm{W}(\Y)\in\mathbb{R}^{M\times M}$ satisfies $[\bm{W}(\Y)]_{i,m} = \whati\delta_{i,m}$.  Combining %
\eqref{eq:what_yj_hadamard_K} and the fact that $[\vonev(\yi)]_j$ is the $(i,j)$-th entry of the matrix $\vonem(\Y)$, we conclude that
\begin{equation*}
    \left(-[\vonev(\yi)]_{j} + \yij\vzero(\yi) - \yij \langle\what,\, \Ki\rangle + \langle\what,\, \Y_{j}\odot \Ki\rangle
    \right)
\end{equation*}
is the $(i,j)$-entry of the matrix%
$-\vonem(\Y) +\bm{K}(\Y) \bmW \Y + \lambda \bmW \Y = -\vonem(\Y) +\bm{K}_{\lambda}(\Y) \bmW \Y$. Thus, \eqref{eq:2times_gradient_FM} is
the $(i,j)$-entry of the matrix
\begin{equation}
    2 \sigma^{-2} \bmW \left( \bm{K}_{\lambda}(\Y) \bmW \Y  - \vonem(\Y)\right).
\end{equation}
In other words,
\begin{align}
    \nabla F_{M}(\Y) &= \sigma^{-2}\bmW \left( \bm{K}_{\lambda}(\Y) \bmW \Y  - \vonem(\Y)\right).
\end{align}%
When the weights $\bmW$ are nonzero, we thus have
\begin{equation}
    \nabla F_{M}(\Y) = \sigma^{-2}\bmW \bm{K}_{\lambda}(\Y) \bmW\left( \Y - \bmW^{-1} \bm{K}_{\lambda}(\Y)^{-1} \vonem(\Y) \right).
\end{equation}
We remark that $\Kyl$ is invertible by construction for any $\lambda > 0$.
Finally, observe that the function $\bm{\Psi}_{\lambda,\MMS}$ defined by~\eqref{eqn:def_msip_lambda} satisfies
\begin{equation}
\bm{\Psi}_{\MMS}(\Y) = \bmW^{-1} \bm{K}_{\lambda}(\Y)^{-1} \vonem(\Y).
\end{equation}

\subsection[Proof of MSIP invariance]{Proof of~\Cref{prop:msip_invariance}}
First, fix $C > 0$ and define scaled density $\tilde{\pi} = C\pi$ for convenience. Then, given $\Y\in\gX^M$, we have %
\begin{equation}
\Psi_{\MSIP, \lambda}(Y;\pi) = \WLY^{-1}\bm{K}_{\lambda}(\Y)^{-1}\bm{V}_{0}(Y)r(Y),
\end{equation}
where $\bm{V}_{0}(Y) = \mathrm{diag}(\vzerob(Y))$, and $r:\gX^M\to\gX^M$ has row $i$ defined by $v_{1}(y_i)/v_{0}(y_i)$.
Fix $\Y\in\gX^M$, then define the functions $\vzero^C$ and $\vone^C$ for $\tilde{\pi}$, corresponding to $\vzero$ and $\vone$, respectively. Remarkably, we observe
\begin{gather}
    \vzero^C(y) = C\int\kappa(x,y)\dpix = C\vzero(y),\\
    \vone^C(y) = C\int x\kappa(x,y)\dpix = C\vone(y).
\end{gather}
This identity also holds for any of our estimators $\hat{v}_0$ and $\hat{v}_1$. Thus,
\begin{equation}
    \frac{v_{1}^{C}(y)}{v_{0}^{C}(y)} = \frac{v_{1}(y)}{v_{0}(y)}.
\end{equation}
Stated plainly, the vector of ratios $r(Y)$ is invariant to the value of $C$. Moreover, the weights $\hat{\bm{w}}_{\lambda}^C$ for $\tilde{\pi}$ satisfy
    \begin{equation}
\bm{w}_\lambda^{^\ast C}(\Y) := \bm{K}_{\lambda}(Y)^{-1}\vzerob^C(\Y) = C\wlY,
    \end{equation}
so that the product $\WLY^{-1}\bm{K}_{\lambda}(\Y)^{-1}\bm{V}_{0}(Y)$ is invariant to the value of $C$. Therefore, $\Psi_{\mathrm{MSIP}, \lambda}(Y; C\cdot \pi)$ is invariant to the value of $C$.

\subsection{One-point MSIP map as a GMM approximation}\label{proof:gmm_msip_stein}
\begin{proposition}\label{prop:gmm_msip_stein}
    Suppose the target density $\pi$ is in RKHS $\mathcal{H}$ with squared-exponential kernel. Evaluate $\widehat{\Psi}_{\mathrm{MSIP},\lambda}$ using one-point estimators $\hvzero^1$ and $\hvonev^{S1}$ in~\eqref{eqn:one_point_defs}. %
    If there exists a measure $\mu$ with
    \begin{equation}
        \pi(x) = \int\kappa(x,y)\ \mathrm{d}\mu(y),
    \end{equation}
    then any fixed point of $\widehat{\Psi}_{\mathrm{MSIP},\lambda}$ is a critical point of $F_{M,\lambda}^{\mu}$. Further, we have the identity
    \[
    F^\mu_{M,\lambda}(\Y) \equiv \min_{\bm{w}\in\mathbb{R}^M} \left\|\pi - \sum_{i=1}^M w_i \mathcal{N}(\cdot;y_i,\ \sigma^2 I)\right\|^2_{\mathcal{H}} + \frac{\lambda}{2}\|\bm{w}\|^2.
    \]
\end{proposition}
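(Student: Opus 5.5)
The plan is to show that, under the deconvolution hypothesis $\pi = \int \kappa(\cdot,y)\,\mathrm{d}\mu(y)$, the one-point estimators $\hvzero^1$ and $\hvonev^{S1}$ are \emph{exact} kernel embeddings of $\mu$, up to the constant $\omega_{\sigma,d}$. The fixed-point claim should then follow from \Cref{prop:msip_invariance} and \Cref{thm:lambda_MSIP_preconditioner}.

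First, I will compare the embeddings pointwise. The zeroth one is immediate: $v_0^\mu(y) = \int \kappa(x,y)\,\mathrm{d}\mu(x) = \pi(y)$, so $\hvzero^1(y) = \omega_{\sigma,d}\, v_0^\mu(y)$. For the first moment, I will differentiate under the integral, which is justified because $\nabla\kappa$ is bounded and $\mu$ is finite (the latter implied by $\pi$ being integrable). For the squared exponential kernel this gives
\begin{equation*}
\nabla \pi(y) = \sigma^{-2}\int (x-y)\,\kappa(x,y)\,\mathrm{d}\mu(x) = \sigma^{-2}\bigl(v_1^\mu(y) - y\, v_0^\mu(y)\bigr).
\end{equation*}
This is exactly identity \eqref{eq:primitive_v_pair} applied to $\mu$. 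Substituting into \eqref{eqn:one_point_defs} yields
\begin{equation*}
\hvonev^{S1}(y) = \omega_{\sigma,d}\bigl(y\,\pi(y) + \sigma^2\nabla\pi(y)\bigr) = \omega_{\sigma,d}\, v_1^\mu(y).
\end{equation*}
Hence $\widehat{\Psi}_{\mathrm{MSIP},\lambda}(\Y;\pi)$ coincides with the exact regularized map built from $(\omega_{\sigma,d} v_0^\mu,\, \omega_{\sigma,d} v_1^\mu)$. By the scaling argument in the proof of \Cref{prop:msip_invariance}, this equals $\Psi_{\mathrm{MSIP},\lambda}(\Y;\mu)$. That scaling argument uses only linearity in the measure, so it applies even if $\mu$ is not normalized.

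Second, I will treat a fixed point $\Y = \Psi_{\mathrm{MSIP},\lambda}(\Y;\mu)$. Here the weights $\wlY$ computed from $\mu$ must be nonzero, since otherwise the map is undefined. Multiplying the fixed-point equation by $\bm{K}_\lambda(\Y)\WLY$ gives $\bm{K}_\lambda(\Y)\WLY\Y = \voneb^\mu(\Y)$. By \Cref{thm:lambda_MSIP_preconditioner} with $\pi$ replaced by $\mu$, this is exactly $\nabla F^\mu_{M,\lambda}(\Y)=0$. I will check that the theorem's proof never uses $\mu(\X)=1$: only finiteness of $\mu$ and of $C_\mu$ enter, and $C_\mu$ drops out of the gradient.

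Third, for the variational identity, I will use the RKHS representation of MMD for the signed measure $\mu - \piquadYw$:
\begin{equation*}
\MMD^2(\mu,\piquadYw) = \Bigl\|\int\kappa(\cdot,y)\,\mathrm{d}\mu(y) - \sum_i w_i\kappa(\cdot,y_i)\Bigr\|_{\mathcal{H}}^2 = \Bigl\|\pi - \sum_i w_i\kappa(\cdot,y_i)\Bigr\|_{\mathcal{H}}^2.
\end{equation*}
I then rewrite $\kappa(\cdot,y_i) = Z_\sigma\,\mathcal{N}(\cdot;y_i,\sigma^2 I)$ and absorb $Z_\sigma$ into the weights through the reparametrization $w_i \mapsto Z_\sigma w_i$. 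Minimizing over $\bm{w}$ then gives the displayed expression.

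I expect the main obstacle to be bookkeeping of constants rather than anything conceptual. The factor $\tfrac12$ in $F$, the penalty coefficient $\lambda$ versus $\tfrac{\lambda}{2}$, and the $Z_\sigma$ rescaling of the weights must be reconciled. I would state the identity as holding up to a positive multiplicative constant and a rescaling of $\lambda$, and note that neither changes the set of critical points in $\Y$, which is all the first claim requires.
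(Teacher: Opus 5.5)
Your proposal is correct and follows the same route as the paper. You identify $\hvzero^1,\hvonev^{S1}$ with the exact embeddings of $\mu$, and you rightly use the scaling argument of \Cref{prop:msip_invariance} to handle the factor $\omega_{\sigma,d}=Z_\sigma$, which the paper's ``precisely aligns'' glosses over. You then apply \Cref{thm:lambda_MSIP_preconditioner} and write the MMD to $\mu$ as the RKHS norm of $\pi-\sum_i w_i\kappa(\cdot,y_i)$. Your caveat that the displayed identity holds only up to a constant factor and a rescaling of $\lambda$ is justified.

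One small correction: rescaling $\lambda$ generally does change the critical set of $F^\mu_{M,\lambda}$. This is harmless, because your first claim is proved directly at the given $\lambda$ and never uses the identity.
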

\begin{proof}
We remark that the MSIP map $\Psi_{\MSIP,\lambda}$ defined using $\vzerob$ and $\vonem$ for the measure $\mu$ precisely aligns with the practical calculation of $\hvzero^1$ and $\hvonev^{S1}$. Therefore, we apply \cref{thm:lambda_MSIP_preconditioner} to arrive at the first half of the result.

Now, consider a configuration of points and weights $Y,\bm{w}$. Let $f_i(x) = \mathcal{N}(x;y_i,\sigma^2 I)$ and note that
\begin{align*}
    \pi(x) - \sum_{i=1}^M w_i f_i(x) = \left\langle \kappa(x,\cdot), \mu - \sum_{i=1}^M w_i\delta_{y_i}\right\rangle = \Phi_\mathcal{H}\left[\mu - \sum_{i=1}^M w_i\delta_{y_i}\right](x),
\end{align*}
where $\Phi_{\mathcal{H}}$ is the feature map of RKHS $\mathcal{H}$~\cite{MuFuSrSc17}%
Therefore, we see that
\[
\left\|\pi - \sum_{i=1}^M w_i \mathcal{N}(\cdot;y_i,\ \sigma^2 I)\right\|^2_{\mathcal{H}} = \left\| \Phi_\mathcal{H}\left[ \mu - \sum_{i=1}^M w_i\delta_{y_i}\right]\right\|^2_{\mathcal{H}} = \MMD^2\left(\mu, \sum_{i=1}^M w_i \delta_{y_i}\right).
\]
The result thus follows from the definition of $F^\mu_{M,\lambda}$.
\end{proof}

\subsection{2D visualisations}\label{sec:two_dim_viz}
In the following figures, we include a slew of visualizations of final configurations of our proposed methods compared to others. The algorithms include the ones benchmarked against in the main paper, as well as multi-start gradient ascent (GA). One can see that GA has significant problems capturing anisotropy within the modes. Further, we distinguish between SVGD with a fixed bandwidth (here, denoted simply SVGD) versus SVGD with an adaptive bandwidth (a-SVGD) through the median heuristic, which is used in the main experiments.

\begin{figure}
    \centering
    \includegraphics[width=0.95\linewidth]{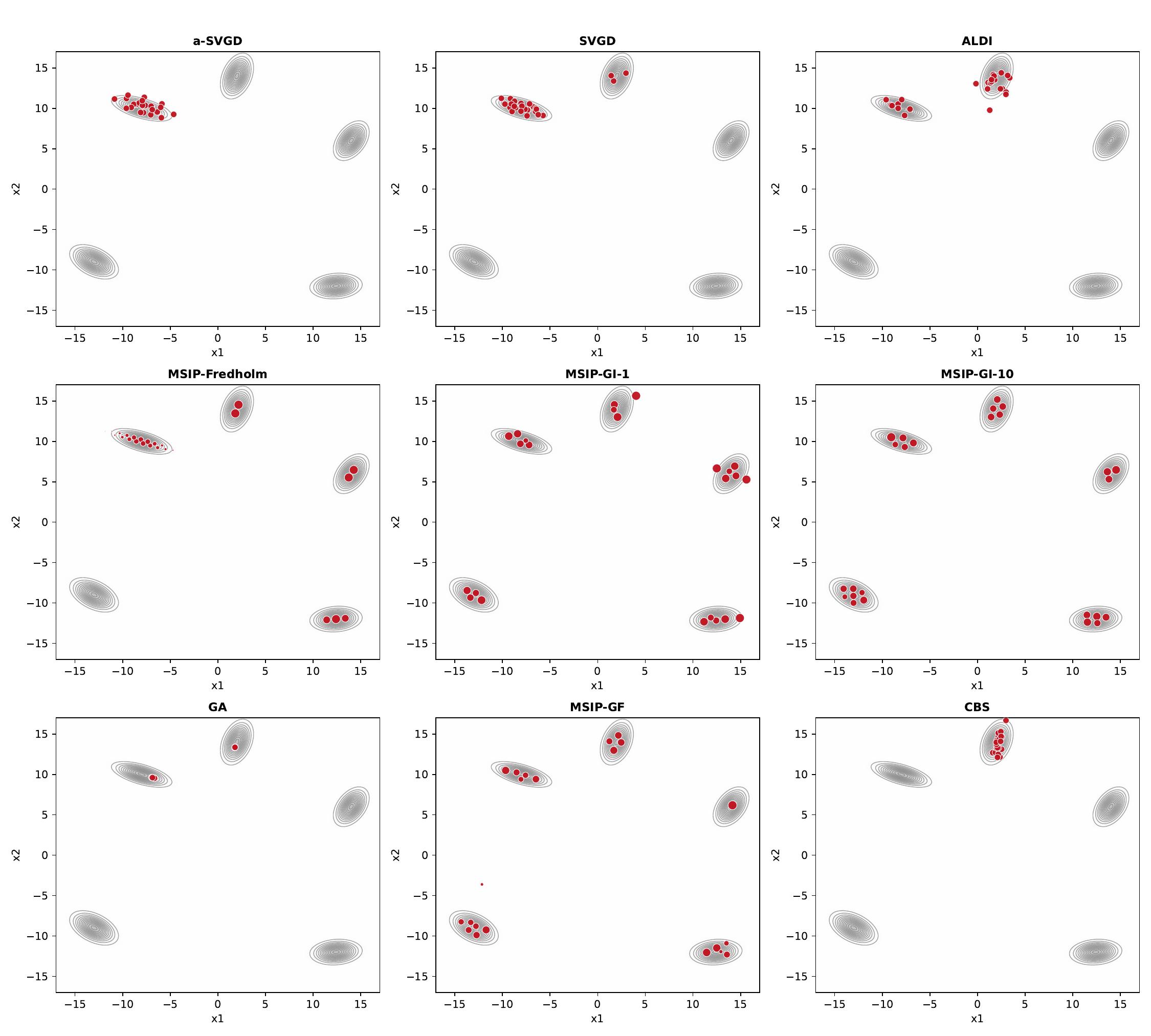}
    \caption{GMM 2D}
    \label{fig:2dvis_gmm2d}
\end{figure}
\newpage

\begin{figure}
    \centering
    \includegraphics[width=0.95\linewidth]{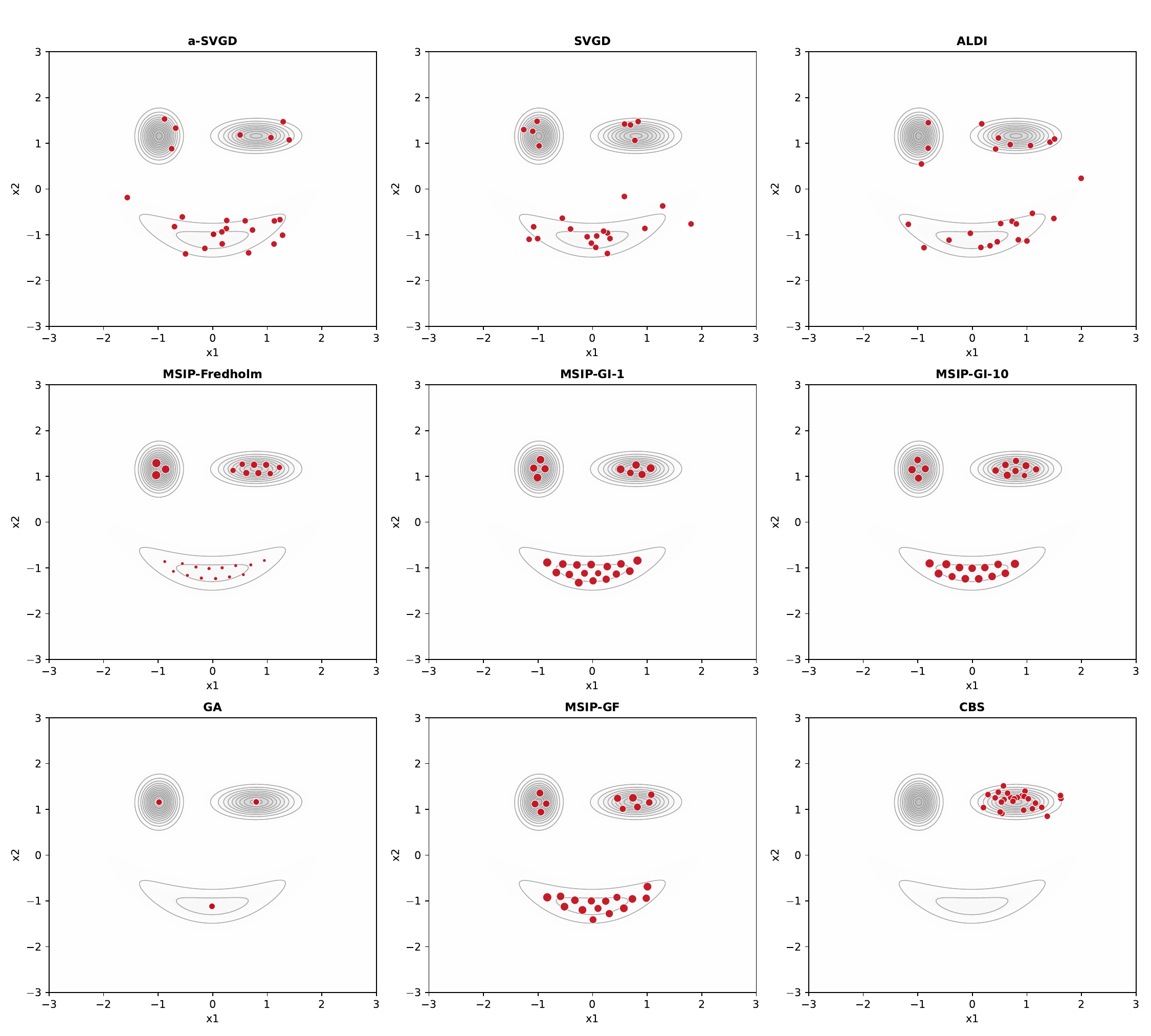}
    \caption{Joker}
    \label{fig:2dvis_joker}
\end{figure}
\newpage
\begin{figure}
    \centering
    \includegraphics[width=0.95\linewidth]{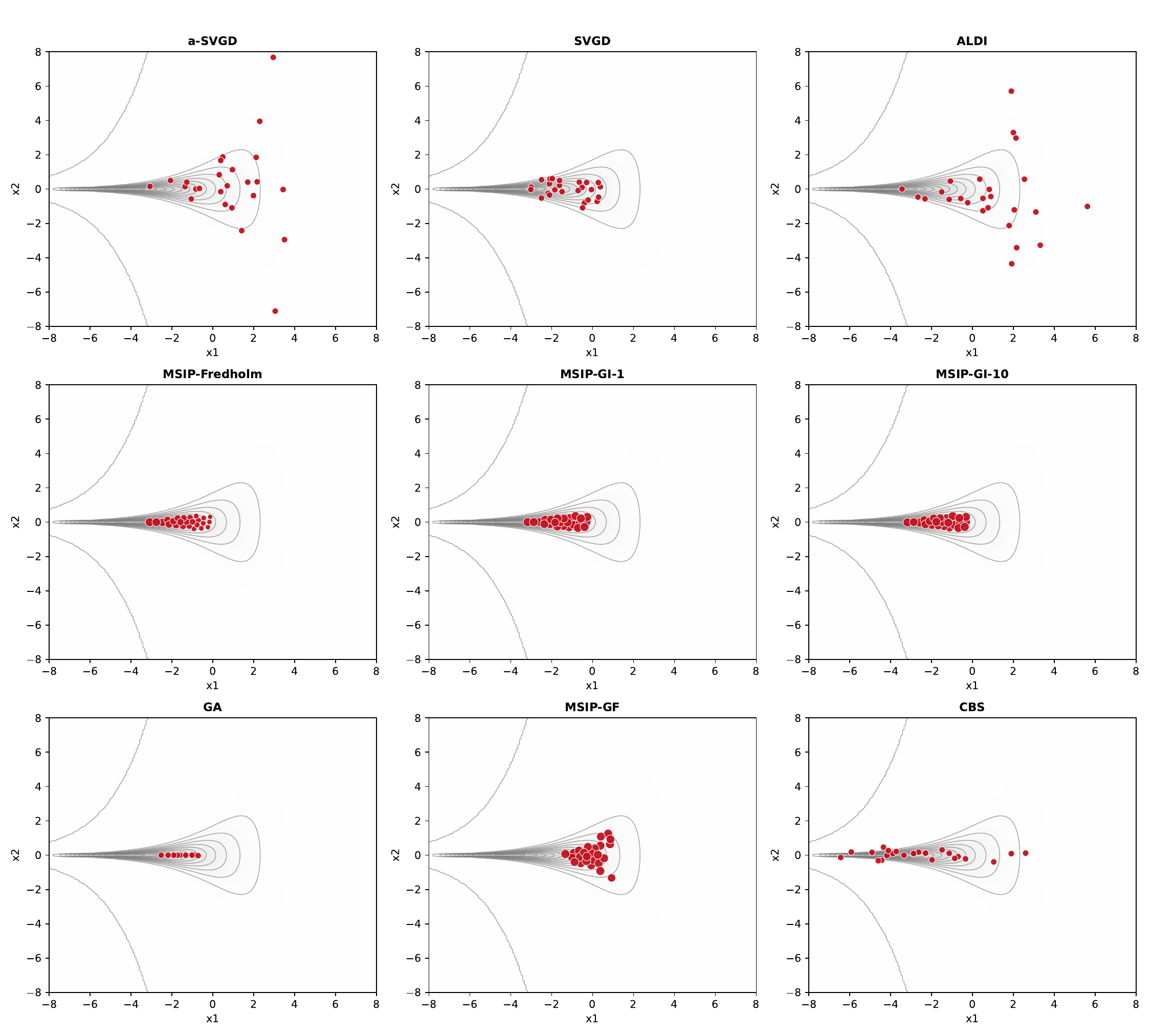}
    \caption{Funnel2D}
    \label{fig:2dvis_funnel2d}
\end{figure}
\newpage
\begin{figure}
    \centering
    \includegraphics[width=0.95\linewidth]{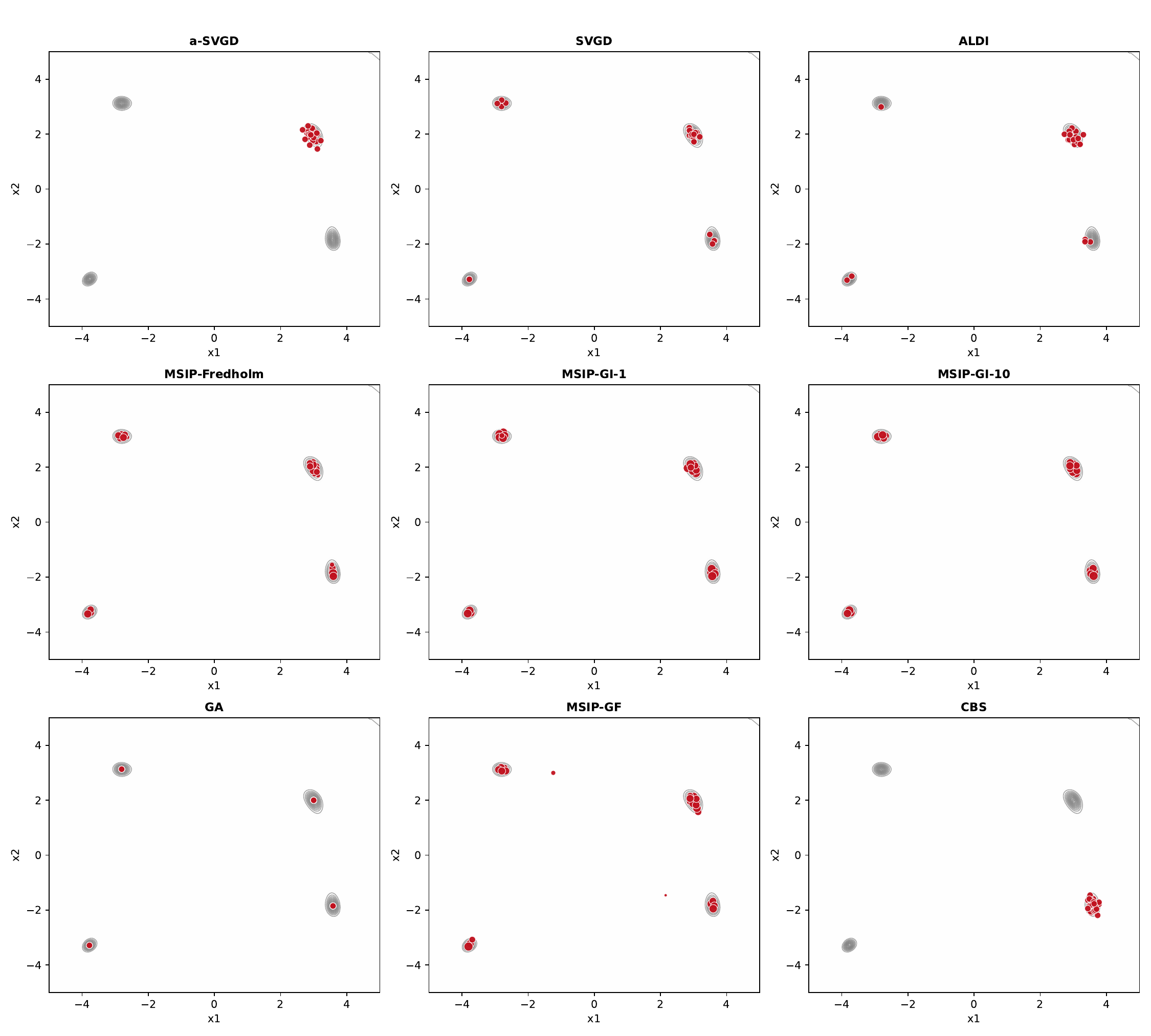}
    \caption{Himmelblau}
    \label{fig:2dvis_himmelblau}
\end{figure}

\subsection{Additional numerics}

\begin{table}[h]
{

\caption{Error of different algorithms for synthetic distributions using KSD with an inverse multi-quadric kernel for \emph{unweighted particles}. We report mean and standard deviation over many trials; the smallest row-wise mean is bolded. Empty cells correspond to an algorithm not converging.}
    \tiny
    \centering
    \label{tab:ksd_imq_sigma_0.5_unweighted}
    \begin{tabular}{@{}l c c c c c c c c@{}}\toprule
    & & \multicolumn{5}{c}{Gradient-informed algorithms} & \multicolumn{2}{c}{Gradient-free algorithms} \\
    \cmidrule(lr){3-7}\cmidrule(lr){8-9}
     & Dim. & \textbf{MSIPF} & \textbf{MSIPGI-1} & \textbf{MSIPGI-10} & SVGD & ALDI & \textbf{MSIPGF} & CBS \\\midrule
    GMM2 & 2 & \textbf{0.794 (0.025)} & 1.249 (0.366) & 1.658 (2.049) & 1.223 (0.039) & 0.930 (0.094) & 25.54 (22.15) & 1.103 (0.090) \\
    GMM5 & 5 & 1.378 (0.029) & 3.014 (3.674) & \textbf{1.310 (0.001)} & 1.499 (0.004) & 1.516 (0.054) & 19.95 (40.99) & 3.293 (0.336) \\
    GMM10 & 10 & 2.228 (9.064e-04) & \textbf{1.965 (0.010)} & 1.972 (0.022) & 2.172 (0.002) & 2.185 (0.039) & 8.774 (8.066) & 3.899 (0.369) \\
    GMM20 & 20 & 3.395 (8.309e-06) & 3.018 (0.031) & \textbf{2.912 (0.044)} & 3.060 (0.003) & 3.355 (0.171) & 2.938 (0.018) & 5.061 (0.345) \\
    Joker & 2 & 8.447 (0.028) & 8.401 (0.016) & \textbf{8.394 (0.026)} & 11.11 (1.579) & 9.286 (0.185) & 11.98 (7.794) & 10.02 (1.475) \\
    Funnel10 & 10 & 20.24 (0.041) & 20.24 (0.039) & 20.24 (0.040) & 24.06 (1.593) & 20.34 (0.167) & \textbf{20.20 (0.026)} & 21.86 (1.396) \\
    Funnel20 & 20 & 28.93 (0.140) & 28.92 (0.163) & 28.94 (0.165) & 67.36 (22.99) & \textbf{28.66 (0.122)} & 28.74 (0.101) & 34.05 (3.475) \\
    Funnel 2D & 2 & 8.628 (0.021) & 8.606 (0.023) & \textbf{8.600 (0.020)} & 9.073 (0.145) & 9.008 (0.091) & 8.755 (0.019) & 95.40 (176.8) \\
    Funnel50 & 50 & 47.78 (2.888) & 46.70 (0.741) & 46.82 (0.785) & 242.5 (142.7) & --- (---) & \textbf{45.54 (0.132)} & 58.30 (10.52) \\
    Funnel5 & 5 & 14.25 (0.021) & 14.25 (0.021) & 14.25 (0.020) & 18.64 (2.574) & 14.43 (0.210) & \textbf{14.24 (0.017)} & 14.98 (0.615) \\
    Himmelblau & 2 & 7.792 (0.239) & 9.050 (1.389) & \textbf{7.389 (0.080)} & 14.24 (0.311) & 14.33 (2.558) & 8.971 (2.273) & --- (---) \\
    \bottomrule
    \end{tabular}
}
\end{table}